\newtheorem{theorem}{Theorem}
\newtheorem{definition}{Definition}
\newtheorem{proposition}{Proposition}
\newcommand{\framework}{\textsc{PatchCURE}\xspace}
\newcommand{\cmark}{\ding{51}}%
\newcommand{\xmark}{\ding{55}}%
\newcommand{\st}{\mathrm{~~s.t.~~}}
\newcommand{\splitk}{k}
\newcommand{\srf}{\text{srf}}
\newcommand{\lrf}{\text{lrf}}
\newcommand{\bM}{{\mathbb M}}
\newcommand{\bC}{{\mathbb C}}
\newcommand{\cA}{{\mathcal A}}
\newcommand{\cF}{{\mathcal F}}
\newcommand{\cM}{{\mathcal M}}
\newcommand{\cR}{{\mathcal R}}
\newcommand{\cX}{{\mathcal X}}
\newcommand{\cY}{{\mathcal Y}}
\newcommand{\bff}{\mathbf{f}}
\newcommand{\bfm}{\mathbf{m}}
\newcommand{\bfr}{\mathbf{r}}
\newcommand{\bfx}{\mathbf{x}}
\begin{document}

\pagestyle{plain}
\title{\framework: Improving Certifiable Robustness, Model Utility, and Computation Efficiency of Adversarial Patch Defenses}
\author{\rm{Chong Xiang$^1$, Tong Wu$^1$, Sihui Dai$^1$, Jonathan Petit$^2$, Suman Jana$^3$, Prateek Mittal$^1$}\\$^1$Princeton University, $^2$Qualcomm Technologies, Inc., $^3$Columbia University}
\maketitle

\begin{abstract}
    State-of-the-art defenses against adversarial patch attacks can now achieve strong certifiable robustness with a marginal drop in model utility. However, this impressive performance typically comes at the cost of 10-100$\times$ more inference-time computation compared to undefended models -- the research community has witnessed an intense three-way trade-off between certifiable robustness, model utility, and computation efficiency. In this paper, we propose a defense framework named \framework to approach this trade-off problem. \framework provides sufficient ``knobs'' for tuning defense performance and allows us to build a family of defenses: the most robust \framework instance can match the performance of any existing state-of-the-art defense (without efficiency considerations); the most efficient \framework instance has similar inference efficiency as undefended models. Notably, \framework achieves state-of-the-art robustness and utility performance across all different efficiency levels, e.g., 16-23\% absolute clean accuracy and certified robust accuracy advantages over prior defenses when requiring computation efficiency to be close to undefended models. The family of \framework defenses enables us to flexibly choose appropriate defenses to satisfy given computation and/or utility constraints in practice.\footnote{Our source code is available at \url{https://github.com/inspire-group/PatchCURE}.}
    
\end{abstract}

\begin{figure*}[t]
    \centering
    \includegraphics[width=\linewidth]{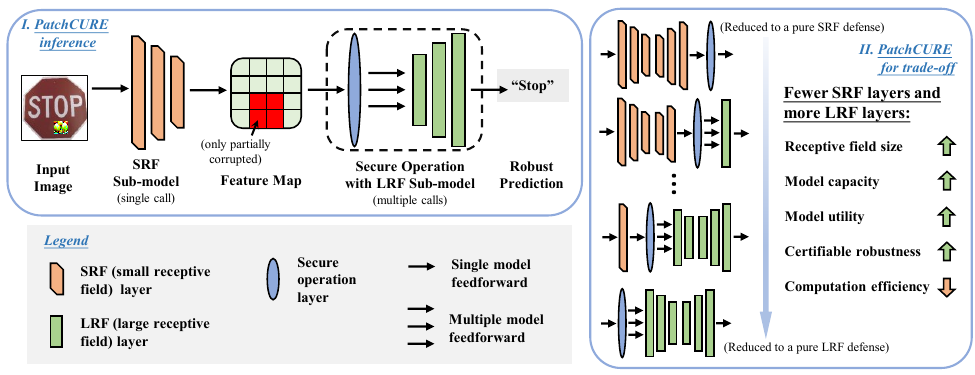}
    \caption{\textbf{\framework overview.} \textbf{I. \framework inference (Section~\ref{sec-defense-alg}):} Given an input image, we first call an SRF (small receptive field) sub-model \textit{once} to extract an intermediate feature map. The use of SRF ensures that only part of the features is corrupted. Next, we leverage secure operation, which typically involves \textit{multiple} calls to an LRF (large receptive field) sub-model, for final predictions. \textbf{II. \framework for the trade-off problem (Section~\ref{sec-defense-discussion}).} We can adjust the combination of SRF and LRF layers to balance the three-way trade-off. As we use fewer SRF layers and more LRF layers, the defense model (with a fixed number of total layers) normally has larger receptive fields, larger model capacity, better model utility, and higher certifiable robustness, but poorer computation efficiency.}
    \label{fig-overview}
\end{figure*}

\begin{figure}[t]
    \centering
    \includegraphics[width=0.95\linewidth]{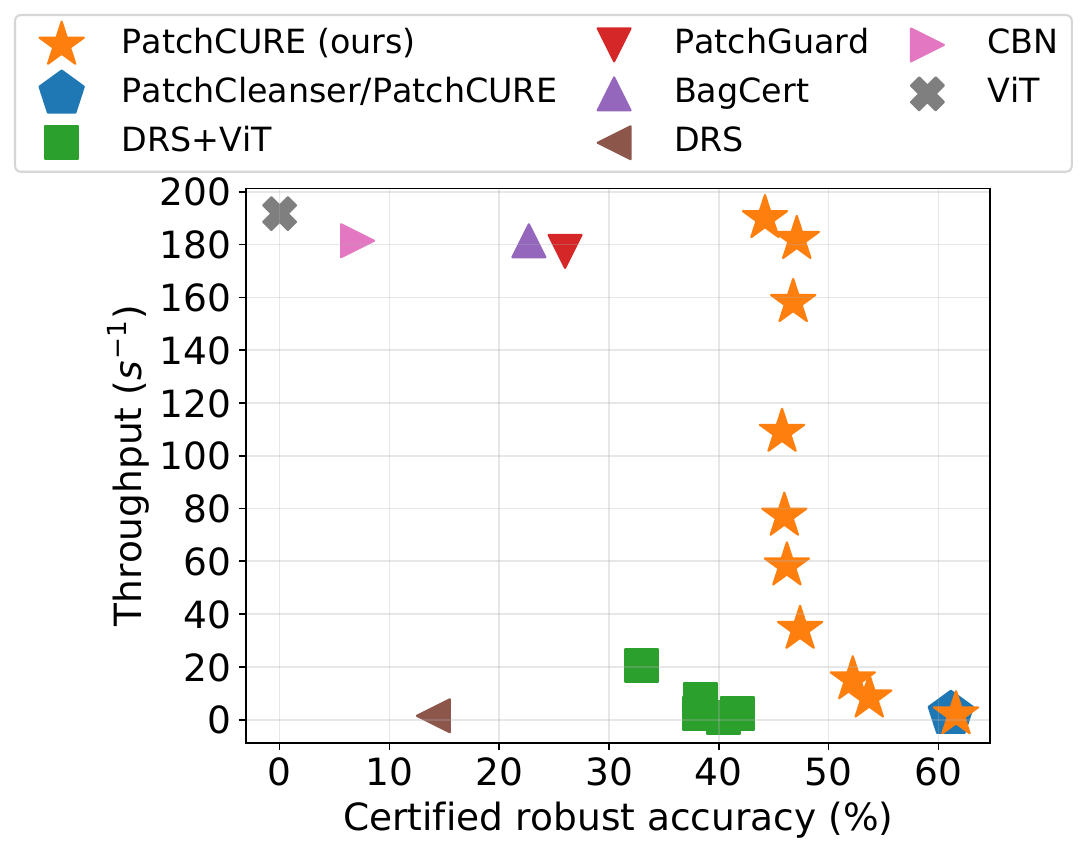}
    \caption{Certified robust accuracy and inference throughput (img/s) for different defenses on ImageNet-1k~\cite{imagenet}: (i) our \framework instances with different settings; (ii) PatchCleanser~\cite{xiang2022patchcleanser} -- also a special instance of \framework; (iii) DRS+ViT, including Smoothed ViT~\cite{salman2022certified}, ECViT~\cite{ecvit}, and ViP~\cite{li2022vip}; (iv) PatchGuard~\cite{xiang2021patchguard}; (v) BagCert~\cite{bagcert}; (vi) De-Randomized Smoothing (DRS)~\cite{levine2020randomized}; (vii) Clipped BagNet (CBN)~\cite{zhang2020clipped}; (viii) undefended ViT~\cite{vit}. Certified robustness considers one 2\%-pixel square patch anywhere on the image.}
    \label{fig-main-comparison}
\end{figure}
\section{Introduction}\label{sec-intro}

The adversarial patch attack~\cite{brown2017adversarial} against computer vision models overlays a malicious pixel patch onto an image to induce incorrect model predictions. Notably, this attack can be realized in the physical world by printing and attaching the patch to real-world objects: images/photos captured from that physical scene would become malicious. The physically realizable nature of patch attacks raises significant concerns in security and safety-critical applications like autonomous vehicles~\cite{rp2}, face authentication~\cite{wei2022adversarial}, security surveillance~\cite{xu2020adversarial}, and thus, motivates the design of defense mechanisms.

Among all the efforts in mitigating adversarial patch attacks, research on certifiably robust defenses stands out with remarkable progress~\cite{chiang2020certified,levine2020randomized,xiang2021patchguard,salman2022certified,ecvit,xiang2022patchcleanser,li2022vip}. These defenses aim to provide provable/certifiable robustness guarantees that hold for any attack strategy within a given threat model, including attackers with full knowledge of the defense algorithms and model details. The property of certifiable robustness provides a pathway toward ending the arms race between attackers and defenders. 
Notably, the state-of-the-art certifiably robust defenses~\cite{xiang2022patchcleanser} now can achieve high certifiable robustness while only marginally affecting the model utility (e.g., 1\% accuracy drop on ImageNet~\cite{imagenet}).

However, the impressive robustness and utility performance comes at the cost of overwhelming computation overheads. Many defenses~\cite{xiang2022patchcleanser,salman2022certified,ecvit,li2022vip} require 10-100$\times$ more inference-time computation compared to undefended models, and this makes them computationally prohibitive for real-world deployment. On the other hand, there exist efficient defenses~\cite{zhang2020clipped,xiang2021patchguard,bagcert} with small computation overheads, but they all suffer from a significant drop in certifiable robustness and model utility (e.g., 30+\% absolute drop from the state-of-the-art). \textit{The research community has witnessed an intense {three-way} trade-off between certifiable robustness, model utility, and computation efficiency~\cite{survey}}. 

Unfortunately, there is no existing method to systematically study this three-way trade-off problem. State-of-the-art (inefficient) defenses, e.g., PatchCleanser~\cite{xiang2022patchcleanser}, do not have a design point to achieve similar computation efficiency as undefended models. Efficient defenses, e.g., PatchGuard~\cite{xiang2021patchguard}, lack approaches to trade part of their efficiency for better utility and robustness performance.
In this paper, we make a \textit{first} attempt to address these challenges: we propose a defense framework named \framework to consolidate \underline{\textbf{C}}ertifiable \underline{\textbf{R}}obustness, Model \underline{\textbf{U}}tility, and Computation \underline{\textbf{E}}fficiency.

\textbf{Contributions.} The main contribution of \framework is identifying a key factor of the three-way trade-off problem: the model \textit{receptive field}, i.e., the image region each extracted feature is looking at. We observe that most existing defenses can be categorized as either small receptive field (SRF) defenses or large receptive field (LRF) defenses. SRF defenses~\cite{xiang2021patchguard,levine2020randomized,salman2022certified,ecvit} use a model with SRF for feature extraction so that there is only a limited number of corrupted features that marginally interfere with the prediction. In contrast, LRF defenses~\cite{xiang2022patchcleanser,mccoyd2020minority} aim to directly mask out the entire patch from the input images and then use a high-performance model with LRF for final predictions. Interestingly, SRF techniques criticize LRF defenses for excessive computation overheads (LRF techniques require multiple model predictions on different masked images) while LRF techniques criticize SRF defenses for poor model utility (SRF limits the information received by each feature and hurts model capacity). In \framework, we propose a \textit{unified} generalization of SRF and LRF techniques.

\textbf{\framework: a framework that unifies SRF and LRF techniques.} We provide an overview of \framework in Figure~\ref{fig-overview} (see the figure caption for more details). A \framework defense has three modules: an SRF sub-model, an LRF sub-model, and a secure operation ``layer'' (an abstract layer representing a robust prediction algorithm/procedure, e.g., the double-masking procedure from PatchCleanser~\cite{xiang2022patchcleanser}). To make a prediction on an input image (upper left of the figure), \framework first uses the SRF sub-model to extract intermediate features and then leverages the secure operation layer, together with multiple calls to the LRF sub-model, for a final robust prediction. To balance the three-way trade-off (right of the figure), \framework adjusts the portion/combination of SRF and LRF layers within the end-to-end defense model.

\textbf{State-of-the-art efficient defense instances.}
Using our \framework framework, we can build state-of-the-art efficient defenses (with similar inference speed as undefended models). In doing so, we first design a ViT-SRF architecture (a ViT~\cite{vit} variant with SRF) to instantiate the SRF sub-model. Next, to minimize computation overheads, we instantiate a lightweight LRF sub-model, which is composed of a linear feature aggregation followed by a classification head. Finally, we use the double-masking algorithm~\cite{xiang2022patchcleanser} as the secure operation. In Figure~\ref{fig-main-comparison}, we demonstrate that the most efficient instances of \framework (top stars) have similar inference speed as undefended ViT (cross), while significantly outperforming prior efficient defenses (top triangles) in terms of certifiable robustness (more than 18\% absolute improvement). Moreover, these efficient \framework defense instances even outperform all but one existing inefficient defense (bottom squares/triangles).\footnote{For simplicity, we did not plot the model utility performance in Figure~\ref{fig-main-comparison}; we will demonstrate \framework's superiority in model utility in Section~\ref{sec-eval}.}

\textbf{A systematic way to balance the three-way trade-off.} 
Furthermore, we demonstrate that the \framework framework provides sufficient knobs to balance the three-way trade-off. We systematically explore different defense parameters and plot a family of \framework defenses in Figure~\ref{fig-main-comparison}. As shown in the figure, \framework instances (stars) can easily bridge the robustness gap between the most efficient \framework (top stars) and the state-of-the-art PatchCleanser~\cite{xiang2022patchcleanser} (pentagon; also a special instance of \framework as discussed in Section~\ref{sec-defense}). Moreover, we note that \framework also achieves the best robustness performance (and utility performance) across all different efficiency levels.
With \framework, we can flexibly build the optimal defense that satisfies certain computation efficiency or model utility requirements.

We summarize our contributions as follows.
\begin{enumerate}
    \item We propose a \framework defense framework that unifies disparate SRF and LRF techniques for approaching the three-way trade-off problem.
    \item We designed a ViT-SRF architecture to instantiate efficient \framework defenses and achieve 18+\% absolute robustness improvements from prior efficient defenses.
    \item We experimentally demonstrate that \framework provides sufficient knobs to balance robustness, utility, and efficiency, and also achieves state-of-the-art robustness and utility performance across different efficiency levels.
\end{enumerate}

\section{Preliminaries}\label{sec-prelim}
In this section, we formulate the research problem, discuss the important concept of model receptive fields, and present an overview of existing SRF and LRF defense techniques. 

\subsection{Problem Formulation}\label{sec-formulation}
In this subsection, we detail image classification models, adversarial patch attacks, certifiable robustness, and three performance dimensions studied in the trade-off problem. We summarize important notations in Table~\ref{tab-notation}.

\textbf{Image classification models.} In this paper, we study image classification models. We let $\mathcal{X}\subset [0,1]^{H\times W\times C}$ denote the input image space: $H,W,C$ correspond to the height, width, and number of channels of the image; $[0,1]$ is the range of normalized pixel values. We next denote the label space as $\mathcal{Y}=\{0,1,\cdots,N-1\}$, where $N$ is the total number of classes. Finally, we denote an image classification model as $\mathbb{M}:\mathcal{X}\rightarrow \mathcal{Y}$, which maps an input image $\mathbf{x}\in \mathcal{X}$ to its prediction label $y\in\mathcal{Y}$. 

We further use $\cF\in\mathbb{R}^{H^\prime\times W^\prime \times C^\prime}$ to denote the space of the intermediate feature map. We will overload the notation $\mathbb{M}$ as (1) a feature extractor that maps an input image to intermediate features $\bM_0:\cX\rightarrow \cF$, or (2) a classifier that makes predictions based on the intermediate feature map $\bM_1:\cF\rightarrow\cY$.

\begin{table}[t]
    \centering
    \caption{Summary of important notations}\label{tab-notation}
 \resizebox{\linewidth}{!}
  { \begin{tabular}{l|l|l|l}
    \toprule
    \textbf{Notation} & \textbf{Description} & \textbf{Notation} & \textbf{Description} 
    \\
    \midrule
    $\mathbb{M}$ &Classification model& $\mathbf{x}\in\mathcal{X}$ & Input image\\
 $\bff\in\mathcal{F}$ &  Intermediate feature &  $y\in\mathcal{Y}$ &  Class label  \\
     $\mathbf{r}\in\mathcal{R}$ & Patch region &  $\cA_\mathcal{R}$ & Threat model \\
     $k$ & Model splitting index & $L$ & \#Layers\\
    $\mathbf{rf}$ & RF size & $\mathcal{M}$ & Mask set\\
      \bottomrule
    \end{tabular}}
\end{table}

\textbf{Adversarial patch attack.} The adversarial patch attack~\cite{brown2017adversarial} is a type of test-time evasion attack. Given a model $\mathbb{M}$, an image $\mathbf{x}$, and its correct label $y$, the attacker aims to generate an adversarial image $\mathbf{x}^\prime \in \mathcal{A}(\mathbf{x}) \subset \mathcal{X}$ satisfying certain constraints $\mathcal{A}$ to induce incorrect predictions $\mathbb{M}(\mathbf{x}^\prime) \neq y$.
The patch attack constraint $\mathcal{A}$ allows the attacker to introduce \textit{arbitrary} pixel patterns within \textit{one} {restricted} image region (i.e., the patch region); the patch region can be at \textit{any location} chosen by the attacker. This constraint threat model is widely used in prior certifiably robust defenses~\cite{zhang2020clipped,levine2020randomized,bagcert,salman2022certified,ecvit,li2022vip,xiang2021patchguard,xiang2022patchcleanser}.\footnote{We quantitatively discuss defenses for multiple patches in Appendix~\ref{apx-pc-exp}.}

Formally, we use a binary pixel tensor $\mathbf{r} \in \{0,1\}^{H\times W}$ to represent each restricted patch region: pixels within the region are set to zeros, and others are ones. We next let $\cR$ denote a set of all possible patch regions $\bfr$ (e.g., patches at all different image locations). Then, we can formalize the constraint as $\mathcal{A}_{\mathcal{R}}(\mathbf{x})$ as $\{\mathbf{r}\odot \mathbf{x} + (\mathbf{1}-\mathbf{r}) \odot \mathbf{x}^{\prime} \ |\  \mathbf{x},\mathbf{x}^\prime \in \mathcal{X}, \mathbf{r} \in \mathcal{R}\}$, where $\odot$ is the element-wise multiplication operator, and $\bfx^\prime$ contains malicious pixels. When clear from the context, we drop $\mathcal{R}$ and simplify $\mathcal{A}_{\mathcal{R}}$ as $\mathcal{A}$.

\textbf{Certifiable robustness.} We study defense algorithms whose robustness can be formally proved or certified. That is, given an image $\bfx$, its correct label $y$, and a patch attack constraint $\cA_\cR$, we aim to build a defense model $\bM$ such that we can certify that 
\begin{equation}\label{eqn-cert}
    \forall \bfx^\prime \in \cA_\cR(\bfx), \mathbb{M}(\mathbf{x}^\prime)=\mathbb{M}(\mathbf{x})=y
\end{equation}
In addition to the defense model $\bM$, we will also develop a special certification procedure $\bC:\cX\times\cY\times\mathbb{P}(\cA_\cR)\rightarrow\{\texttt{True},\texttt{False}\}$ to determine whether Equation~\ref{eqn-cert} holds for an image $\bfx$, its label $y$, and threat model $\cA_\cR$. Note that the universal quantifier $\forall$ requires the certification procedure $\mathbb{C}$ to account for all possible attackers within the threat model $\mathcal{A}_\cR$, who could have full knowledge of the defense algorithm and setup. This certification ensures that any robustness we claim will not be compromised by adaptive attackers, which is a significant advantage over empirical defenses~\cite{hayes2018visible,naseer2019local,wu2019defending,rao2020adversarial,Mu2021defending,cosgrove2020robustness} without formal robustness guarantees.

\textbf{Three performance dimensions.} We will study the trade-off problem between three performance dimensions. The first is \textit{certifiable robustness}, as discussed above. We will run the certification procedure over all images of a labeled test dataset and report certified robust accuracy as the fraction of images for which the procedure $\mathbb{C}$ returns \texttt{True}. The second is \textit{model utility} -- model accuracy on clean images without adversarial patches, also termed as clean accuracy.\footnote{Model utility can be viewed as an upper bound of achievable robustness since we do not expect robust accuracy to be higher than clean accuracy. Therefore, we can sometimes find utility and robustness tied together when discussing the three-way trade-off.} 
The third is \textit{computation efficiency}; we measure it via inference throughput -- the number of images a model can process within every second. 

\begin{figure}
    \centering
    \includegraphics[width=0.9\linewidth]{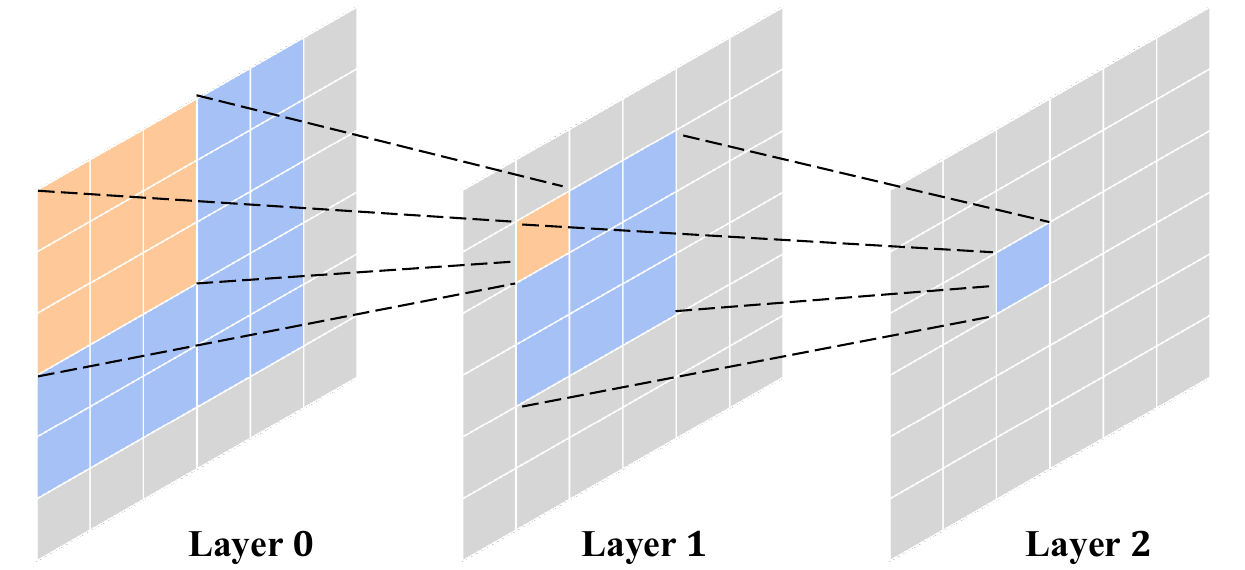}
    \caption{\textbf{Illustration of model receptive field. }For a convolutional network with a kernel size of $3$ and stride size of $1$, the blue cell in Layer $2$ is affected by $3\times3$ cells in Layer $1$ and $5\times5$ cells in Layer $0$ (the model input).}
    \label{fig-rf}
\end{figure}

\subsection{Receptive Fields of Vision Models}

The receptive field of a vision model is the input image region that each extracted feature is looking at, or is affected by; we provide a visual example in Figure~\ref{fig-rf}. A conventional vision model first extracts features from different image regions (with different focuses), and then aggregates all extracted features and makes an informed prediction. 

There has been a line of research studying the effect of receptive field size on model performance~\cite{bagnet,araujo2019computing,le2017receptive,luo2016understanding}. Normally, larger receptive fields lead to larger model capacity and thus better model performance (as long as the model is well-trained with enough data). For example, a convolutional network like ResNet~\cite{resnet} usually has better performance when it has deeper layers with larger receptive fields; the emerging powerful Vision Transformer (ViT)~\cite{vit} architecture allows features in every layer to have a large/global receptive field of the entire image. 
In this paper, we will demonstrate that the receptive field size also plays an important role in the three-way trade-off problem for certifiably robust patch defenses. 

\subsection{Overview of SRF and LRF Defenses}\label{sec-prelim-pgpc}

In this subsection, we provide an overview of existing small receptive field (SRF) and large receptive field (LRF) defense techniques. 

\textbf{SRF defenses.} The use of SRF was first explicitly discussed in PatchGuard~\cite{xiang2021patchguard}. Its key insight is that: using models with SRF for feature extraction can limit the number of features that see (and hence, are affected by) the adversarial patch. The maximum number of corrupted features $p^f$ can be computed as 
\begin{equation}\label{eqn-corruption}
    p^f = \lceil(p+r-1)/s\rceil
\end{equation}
where $p$ is the patch size in the image space, $r$ is the receptive field size of the SRF model, $s$ is the effective stride of the model receptive field (i.e., the distance between the centers of receptive fields of two adjacent features). The correctness of this formula was proven in~\cite{xiang2021patchguard}. Next, an SRF defense performs lightweight secure feature aggregation (e.g., clipping, majority voting) on the partially corrupted feature map for robust predictions. The idea of SRF has been shown effective and adopted by many defenses~\cite{zhang2020clipped,levine2020randomized,bagcert,salman2022certified,ecvit,li2022vip}. 

\textit{Strength: high efficiency.} Since the secure aggregation operates on the final feature map, its computation complexity can be as low as a linear transformation layer. Therefore, the computation of SRF defenses is dominated by the feedforward pass of the SRF model, which can be made as efficient as standard CNN or ViT (see Section~\ref{sec-srf} for more details). 

\textit{Weakness: poor utility.} However, though the use of SRF bounds the number of corrupted features, it also limits the information received by each feature. As a result, the model utility is affected. For example, the clean accuracy on ImageNet-1k~\cite{imagenet} reported in the original PatchGuard paper~\cite{xiang2021patchguard} is only 54.6\% while standard ResNet-50~\cite{resnet} and ViT-B~\cite{vit} can achieve 80+\% accuracy without additional training data~\cite{strikeback,he2022masked}. 

\textbf{LRF defenses.} The key idea of LRF defenses is to remove the patch from the input image and then use a high-performance LRF model to recover robust predictions. Its most representative defense, PatchCleanser~\cite{xiang2022patchcleanser}, proposed a \textit{double-masking} algorithm that applies different pixel masks to the input image and analyzes model predictions (with LRF) on different masked images to recover the correct prediction. The intuition behind the masking defense is that: model predictions on images with different masks usually have a unanimous agreement on clean image (predictions are robust to partial occlusions), but disagree when there is an adversarial patch (when the patch is completely masked, the model prediction changes to benign). We additionally provide pseudocode for the PatchCleanser~\cite{xiang2022patchcleanser} in Appendix~\ref{apx-pc}.

\textit{Strength: high utility\&robustness.} Since LRF defenses operate on the input image, they are compatible with any high-performance image classifiers (which usually have LRF). This allows LRF defenses like PatchCleanser~\cite{xiang2022patchcleanser} to maintain a very high model utility (e.g., 1\% drops on ImageNet-1k from vanilla ResNet and ViT) while achieving state-of-the-art certifiable robustness.

\textit{Weakness: low computation efficiency.} The downside of image-space operations of LRF defenses is that it normally requires performing model feedforward \textit{multiple times} on different masked images. As a result, LRF defenses can easily incur 10+ times more computation compared to undefended models, making it impractical for real-world deployment.

In summary, SRF and LRF techniques are widely viewed as two distinct approaches to building defenses with different strengths and weaknesses~\cite{survey}. In the next section, we will discuss how \framework unifies these disparate SRF and LRF techniques to approach the three-way trade-off problem.

\section{\framework Framework}\label{sec-defense}
In this section, we discuss our \framework design. We start with our defense insights and the full \framework algorithm. We then elaborate on approaches for building SRF models, discuss robustness certification,  and conclude with \framework's instantiation strategy.

\begin{table}[t]
    \centering
    \caption{Comparison for the SRF defense, the LRF defense, and \framework}
    \label{tab-defense-comparison}
     \resizebox{\linewidth}{!}
    {\begin{tabular}{l|l|l|l}
    \toprule
    Technique   & Secure Operation Loc.  & Utility\&Robustness & Efficiency  \\
         \midrule
         SRF (e.g.,~\cite{xiang2021patchguard}) &  Final (feature) layer  &Poor/Fair

         & Good \\
         LRF (e.g.,~\cite{xiang2022patchcleanser}) &  Input (image) layer  & Good & Poor \\
         \framework (ours) & Flexible & Tunable & Tunable\\
        \bottomrule
    \end{tabular}}
\end{table}

\subsection{\framework Insights}

Section~\ref{sec-prelim-pgpc} demonstrated a tension between certifiable robustness, model utility, and computation efficiency: existing defenses with SRF or LRF techniques struggle to perform well in all three dimensions. In Table~\ref{tab-defense-comparison}, we summarize and compare different properties of SRF and LRF techniques. This table helps us identify a key factor of the three-way trade-off problem, which inspires our \framework design. 

\textbf{Key factors in the trade-off problem.} From Table~\ref{tab-defense-comparison}, we find that different \textit{receptive field sizes} lead to different secure operation locations (i.e., where the defense logic is applied to) and eventually different defense properties. SRF defenses (e.g., PatchGuard~\cite{xiang2021patchguard}) can operate on the final feature layer. This design only requires SRF defenses to perform \textit{one} expensive model feedforward; that is, their secure operation can reuse the extracted feature map to achieve high computation efficiency. However, the use of SRF hurts model utility and robustness. On the other hand, LRF defenses (e.g., PatchCleanser~\cite{xiang2022patchcleanser}) operate on the input image. This makes them compatible with high-performance LRF models to achieve high model utility (and robustness). However, image-space defenses suffer from low efficiency because they need to perform \textit{multiple} expensive end-to-end model feedforward passes on different modified images (e.g., masked images).

\textbf{\framework as a unified defense.} To benefit from the strengths of SRF and LRF defenses, we propose a unified \framework defense that leverages both SRF and LRF techniques (bottom row of Table~\ref{tab-defense-comparison}). Recall our defense overview in Figure~\ref{fig-overview}. A \framework defense has three modules: an SRF sub-model, an LRF sub-model, and a secure operation layer/procedure that leverages the LRF sub-model. At the inference time, we call the SRF sub-model \textit{once} to extract intermediate features and then activate the secure operation with \textit{multiple} calls to the LRF sub-models to remove corrupted features for a robust final prediction. 
Notably, the SRF component allows us to reuse the extracted intermediate feature map to \textbf{save computation}. In the meanwhile, the use of SRF also ensures that only a limited number of features are corrupted, so we can still use secure operation techniques like double-masking algorithm~\cite{xiang2022patchcleanser} to remove corrupted features for \textbf{certifiable robustness} (note that all intermediate features are likely to be corrupted without the use of SRF~\cite{xiang2021patchguard}). 
Furthermore, the use of LRF makes the end-to-end model have large receptive fields so that we can retain \textbf{high model utility}.

\subsection{\framework Algorithm}\label{sec-defense-alg}
In this subsection, we discuss details of \framework algorithm, which includes model construction, i.e., how to build SRF and LRF sub-models, model inference, and robustness certification procedures. We provide their pseudocode in Algorithm~\ref{alg-all}.

\textbf{Model construction (Lines~\ref{ln-construct-s}-\ref{ln-construct-e}).} We build SRF and LRF sub-models based on existing state-of-the-art models, which we call \textit{base models}. First, we select an off-the-shelf base model architecture $\bM_\text{b}$ (e.g., ViT~\cite{vit}) and pick its $\splitk^\text{th}$ backbone layer (e.g., the second self-attention layer in ViT) as the \textit{splitting layer}. Second, we split the model at this layer into two sub-models using the $\textsc{Split}(\cdot)$ procedure. We have $\bM_{0},\bM_{1}=\textsc{Split}(\bM_\text{b},k)\st \bM(\bfx) = \bM_1(\bM_0(\bfx))$; $\bM_{0}$ contains layers with indices from $0$ to $\splitk-1$ while $\bM_{1}$ has the remaining layers. Third, we keep the second sub-model $\bM_1$ unchanged as the LRF sub-model $\bM_\text{lrf}=\bM_1$ (note that vanilla models normally have LRF for high utility) and convert the first sub-model $\bM_{0}$ into an SRF sub-model $\bM_\text{srf} = \textsc{ToSRF}(\bM_0,\mathbf{rf})$ with a receptive field size of $\mathbf{rf}$. We will discuss the details of $\textsc{ToSRF}(\cdot)$ in Section~\ref{sec-srf} -- how to build attention-based (e.g., ViT) and convolution-based (e.g., ResNet) SRF models with high computation efficiency. Finally, we \textit{conceptually} insert a secure operation ``layer" between the SRF sub-model and the LRF sub-model (recall Figure~\ref{fig-overview}). The secure operation layer represents a robust prediction algorithm/procedure $\textsc{SO}(\cdot)$; its design choices include the double-masking algorithm proposed in PatchCleanser~\cite{xiang2022patchcleanser} and the Minority Reports algorithm~\cite{mccoyd2020minority}. We will focus on the double-masking algorithm in this paper since it is the \textit{state-of-the-art} certifiably robust algorithm to \textit{recover} correct predictions. We provide details of the double-masking algorithm in Appendix~\ref{apx-pc}. We also discuss alternative secure operation choices in Section~\ref{sec-discussion}.

\begin{algorithm}[t]
    \centering
    \caption{\framework algorithm}\label{alg-all}
    \begin{algorithmic}[1]
    \Require Base model $\bM_b$, splitting layer index $k$, SRF size $\mathbf{rf}$, secure operation algorithm $\textsc{SO}(\cdot)$ and its certification procedure $\textsc{SO-Cert}(\cdot)$, secure operation parameters $\cM$, patch threat model $\cA_\cR$
    \Procedure{PCURE-Construct}{$\bM_b,k,\mathbf{rf}$} \label{ln-construct-s}
    \State $\bM_0,\bM_1\gets\textsc{Split}(\bM_b,k)$ \Comment{Split model at $k^\text{th}$ layer}
    \State $\bM_{\srf}\gets\textsc{ToSRF}(\bM_0,\mathbf{rf})$ \Comment{Convert $\bM_0$ to SRF}
    \State $\bM_\lrf\gets\bM_1$ \Comment{Keep $\bM_1$ as LRF}
    \State \Return $\bM_{\srf},\bM_\lrf$
    \EndProcedure\label{ln-construct-e}
    \item[]
    \Procedure{PCURE-Infer}{$\bfx,\bM_{\srf},\bM_\lrf,\cM$}\label{ln-infer-s}
    \State $\bff\gets\bM_\srf(\bfx)$ \Comment{Extract SRF features}
    \State $\hat{y}\gets\textsc{SO}(\bff,\bM_\lrf,\cM)$ \Comment{Secure operation on $\bff$}
    \State\Return $\hat{y}$
    \EndProcedure\label{ln-infer-e}
    \item[]
    \Procedure{PCURE-Certify}{$\bfx,y,\bM_{\srf},\bM_\lrf,\cM,\cA_\cR$}\label{ln-certify-s}
    \State $\bff\gets\bM_\srf(\bfx)$\Comment{Extract SRF features}
    \State $\cA_\cR^f\gets\textsc{Map}(\bM_\srf,\cA_\cR)$ \Comment{To feature-space adversary}\label{ln-map}
    \State $c\gets\textsc{SO-Cert}\label{ln-SO-Cert}(\bff,\bM_\lrf,\cM,\cA_\cR^f,y)$  \Comment{Certification}
    \State\Return $c$
    \EndProcedure\label{ln-certify-e}
    
\end{algorithmic}
\end{algorithm}

\textbf{Model inference (Lines~\ref{ln-infer-s}-\ref{ln-infer-e}).} Once we build the SRF and LRF sub-models and decide on the secure operation algorithm, the inference is straightforward. Given an input image $\bfx$, we first use the SRF model $\bM_\srf(\bfx)$ to extract features $\bff=\bM_\srf(\bfx)$. Next, we activate the secure operation $\textsc{SO}(\cdot)$ on the feature tensor for the final prediction: $\hat{y}=\textsc{SO}(\bff,\bM_\lrf,\cM)$. The procedure $\textsc{SO}(\cdot)$ takes as inputs a feature/image tensor $\bff$, an LRF model $\bM_\lrf$, and any parameters $\cM$ of the secure operation algorithm, and outputs robust prediction labels $\hat{y}\in\cY$. 

\textbf{Robustness certification (Lines~\ref{ln-certify-s}-\ref{ln-certify-e}).} Given a \textit{clean} image $\bfx$, we first extract features using the SRF model $\bff=\bM_\srf(\bfx)$. Next, we map the image-space adversary to the feature-space adversary: the procedure $\textsc{Map}(\cdot)$ uses  Equation~\ref{eqn-corruption} to calculate the number of corrupted features $p^f$ based on the adversarial patch size $p$ in the input image, convert each image-space patch region $\bfr$ to feature-space corruption region $\bfr^f$ accordingly, and then generate the feature-space adversary threat model $\cA_\cR^f$ for further robustness analysis.
Finally, we call the robustness certification procedure of the secure operation algorithm on the feature tensor (with the feature-space threat model): $\hat{y}=\textsc{SO-Cert}(\bff,\bM_\lrf,\cM,\cA_\cR^f,y)$. The certification procedure $\textsc{SO-Cert}(\cdot)$ takes as inputs a feature/image tensor $\bff$, the LRF model $\bM_\lrf$, the secure operation parameters $\cM$ used for the model inference, the feature-space threat model $\cA_\cR^f$, and the ground-truth label $y\in\cY$, and outputs a boolean variable $b\in\{\texttt{True},\texttt{False}\}$ indicating if the certification succeeds, i.e., $\hat{y}=\bM_\lrf(\bff^\prime)= y, \forall \bff^\prime\in\cA^f_\cR(\bff)$.  We will state and prove the correctness of our certification procedure $\textsc{PCURE-Certify}(\cdot)$ in Section~\ref{sec-cert}.

\begin{figure}[t]
    \centering
    \includegraphics[width=\linewidth]{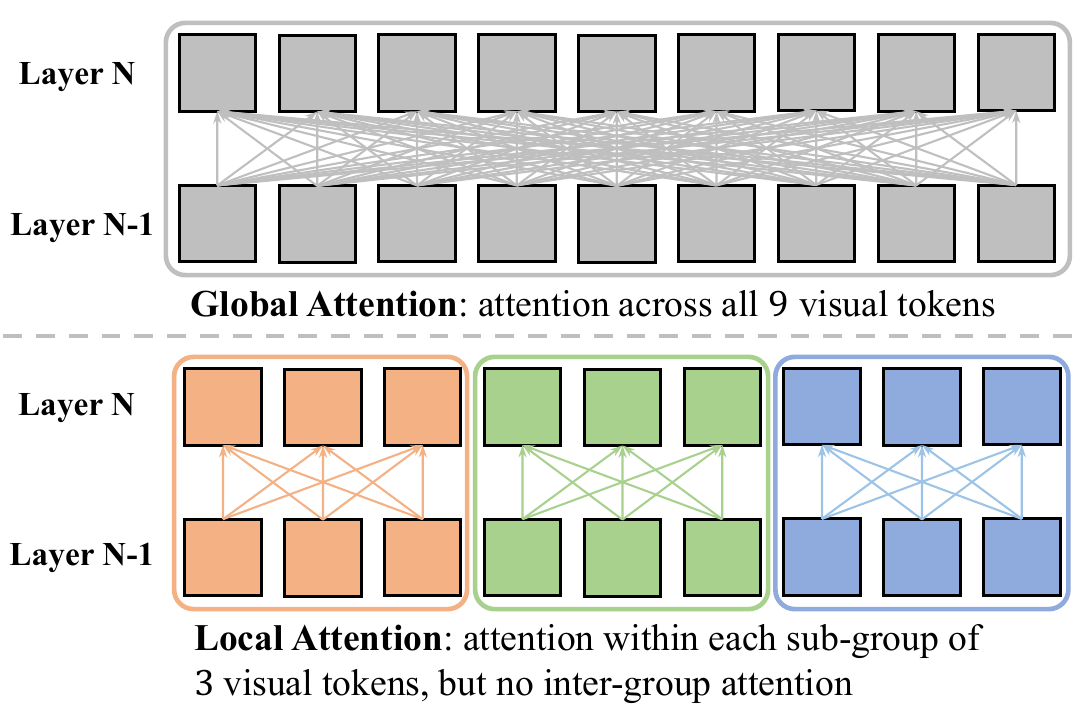}
    \caption{Local attention: each square is a visual token.}
    \label{fig-local-attention}
\end{figure}
\subsection{Building SRF Models}\label{sec-srf}

In this subsection, we discuss the details of the $\textsc{ToSRF}(\cdot)$ procedure -- how to build SRF models upon off-the-shelf LRF architectures. Since we take computation efficiency as a major performance metric in this paper, we aim to build SRF models with similar computation efficiency as vanilla LRF models. We will discuss approaches for both attention-based (e.g., ViT~\cite{vit}) and convolution-based (e.g., ResNet~\cite{resnet}) architectures.

\textbf{ViT-SRF -- a ViT~\cite{vit} variant.} To build an attention-based SRF model, we design a ViT variant named ViT-SRF. A vanilla ViT~\cite{vit} model leverages \textit{global attention} across all visual tokens to fine-tune visual features in each attention layer: the top of Figure~\ref{fig-local-attention} illustrates a global attention example where the attention operates across all $9$ visual tokens. The global attention makes each ViT feature have a large (global) receptive field. As a result, even if the attacker only corrupts one token, all tokens might be maliciously affected and corrupted. To enforce SRF and avoid complete feature corruption, we propose to use \textit{local attention} over a subset of visual tokens. At the bottom of Figure~\ref{fig-local-attention}, we provide an example where the attention operation is applied locally to sub-groups of visual tokens (each sub-group has $3$ tokens). With this local attention operation, one corrupted token can only affect tokens within the sub-group instead of all tokens.

\textit{Computation efficiency.} The ViT-SRF model can achieve similar computation efficiency as the vanilla ViT model. First, the number of attention operations can be reduced by our local attention operation. For example, the global attention example in Figure~\ref{fig-local-attention} requires ${{3\cdot3}\choose 2}=36$ attention pairs where the local attention only requires $3\cdot {3\choose2}=9$. On the other hand, the local attention requires additional operations like reorganizing tensor memory layout and adding/removing \texttt{[CLS]} tokens. We will show that our implementation of ViT-SRF has a similar overall inference speed as vanilla ViT in Section~\ref{sec-eval}.

\textbf{BagNet~\cite{bagnet} -- a ResNet~\cite{resnet} variant.} To build convolution-based SRF models, we leverage an off-the-shelf SRF architecture named BagNet~\cite{bagnet}. BagNet is based on the ResNet-50 architecture~\cite{resnet}; it achieves SRF by reducing the kernel sizes and strides of certain convolution and max-pooling layers to ones. BagNet was originally proposed for interpretable machine learning and later adopted as a building block for patch defenses (e.g., PatchGuard~\cite{xiang2021patchguard}, BagCert~\cite{bagcert}).

\textit{Computation efficiency.} Small kernel sizes and strides increase the size of the feature map as well as the overall computation costs. As a result, BagNet requires more computation than ResNet-50: we find that BagNet can be 1.5$\times$ slower than ResNet in our empirical evaluation (Section~\ref{sec-eval}). 
Nevertheless, we still categorize BagNet as a computationally efficient SRF model, since other convolution-based SRF models usually incur more than 10$\times$ computation overheads.

\textbf{Notes: other inefficient SRF models.} We note that there exist other \textit{inefficient} SRF architectures~\cite{levine2020randomized,li2022vip,salman2022certified,ecvit}. For example, a popular SRF strategy is to use an ensemble of vanilla classifiers: each classifier makes a prediction on a cropped small image region; the final prediction is generated via majority voting. This strategy was first proposed in De-randomized Smoothing~\cite{levine2020randomized} for ResNet, which incurs a 200+$\times$ slowdown. Later, this ensemble idea was adapted for ViT~\cite{vit} in Smoothed ViT~\cite{salman2022certified}, ECViT~\cite{ecvit}, and ViP~\cite{li2022vip} with different tricks to improve computation efficiency; however, these SRF models still require more than 10$\times$ more computation than vanilla ViT. In contrast, our ViT-SRF has similar computation efficiency as vanilla ViT and BagNet only incurs a 1.5$\times$ slowdown (Table~\ref{tab-undefended} in Section~\ref{sec-eval}).

\subsection{Robustness Certification}\label{sec-cert}

In this subsection, we discuss the robustness certification algorithm ($\textsc{PCURE-Certify}(\cdot)$ in Algorithm~\ref{alg-all}). We first discuss the correctness of two sub-procedures used in $\textsc{PCURE-Certify}(\cdot)$, i.e., $\textsc{Map}(\cdot)$ and $\textsc{SO-Cert}(\cdot)$, and then formally state and prove the correctness of our certification algorithm in Theorem~\ref{thm}.

\textbf{Sub-procedure $\textsc{Map}(\cdot)$.} Recall that $\textsc{Map}(\cdot)$ uses Equation~\ref{eqn-corruption} (from Section~\ref{sec-prelim-pgpc}) to calculate the maximum number of corrupted features and convert an image-space threat model $\cA_\cR$ to a feature-space threat model $\cA_\cR^f$. A correctly implemented $\textsc{Map}(\cdot)$ procedure provides the following proposition.

\begin{proposition}[Correctness of $\textsc{Map}(\cdot)$]\label{lemma1}
    Given a correctly implemented $\textsc{Map}(\cdot)$, an image-space threat model $\cA_\cR$, an SRF sub-model $\bM_\srf$, an input image $\bfx$, and the converted feature-space threat model $\cA_\cR^f=\textsc{Map}(\bM_\srf,\cA_\cR)$, we have: for any adversarial image $\bfx^\prime\in\cA_\cR(\bfx)$, its corresponding adversarial feature map $\bff^\prime = \bM_\srf(\bfx^\prime)$ is covered by the feature-space threat model $\cA_\cR^f$. Formally, we have $\forall \bfx^\prime\in \cA_\cR(\bfx),\bff^\prime = \bM_\srf(\bfx^\prime):\bff^\prime\in\cA_\cR^f(\bff), \bff=\bM_\srf(\bfx)$. 
\end{proposition}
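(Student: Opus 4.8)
The plan is to reduce the claim to the receptive-field locality property of the SRF sub-model together with the already-established correctness of Equation~\ref{eqn-corruption}. First I would unpack the image-space threat model: by definition, any $\bfx^\prime\in\cA_\cR(\bfx)$ has the form $\bfr\odot\bfx+(\mathbf{1}-\bfr)\odot\bfx^{\prime\prime}$ for some patch region $\bfr\in\cR$ and some $\bfx^{\prime\prime}\in\cX$. In particular, $\bfx^\prime$ agrees with the clean image $\bfx$ at every pixel outside the region encoded by $\bfr$, and differs from it only inside that one restricted patch region.

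The key step is the locality argument. Because $\bM_\srf$ has receptive field size $\mathbf{rf}$, each output feature at a given spatial position is a function of only the input pixels lying inside its receptive field. Hence, for any feature position whose receptive field is disjoint from the patch region $\bfr$, the pixels feeding that feature are identical in $\bfx$ and $\bfx^\prime$, so the corresponding entries of $\bff=\bM_\srf(\bfx)$ and $\bff^\prime=\bM_\srf(\bfx^\prime)$ coincide. Equivalently, $\bff^\prime$ can differ from $\bff$ only at feature positions whose receptive field overlaps $\bfr$; I would call this set of positions $\bfr^f$.

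Next I would bound and locate $\bfr^f$. Equation~\ref{eqn-corruption}, whose correctness is proven in~\cite{xiang2021patchguard}, gives that the number of affected feature positions along each spatial dimension is at most $p^f=\lceil(p+r-1)/s\rceil$, and the same receptive-field/stride geometry pins down which contiguous block of features is affected for each $\bfr$. A correctly implemented $\textsc{Map}(\cdot)$ is precisely one that, for every $\bfr\in\cR$, computes a feature-space corruption region $\bfr^f$ covering all these possibly-affected positions and assembles $\cA_\cR^f$ to permit arbitrary corruption within $\bfr^f$. Combining this with the locality conclusion, $\bff^\prime$ and $\bff$ agree outside $\bfr^f$, which is exactly the membership condition $\bff^\prime\in\cA_\cR^f(\bff)$; since $\bfx^\prime$ was arbitrary, the universally quantified statement follows.

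The main obstacle is the locality step, and in particular ensuring the mapped region $\bfr^f$ covers every possibly-corrupted feature \emph{uniformly over all patch placements} $\bfr\in\cR$, not merely a single fixed placement: the universal quantifier over $\cR$ is what forces $\textsc{Map}(\cdot)$ to account for patches at all locations, and the ceiling/stride arithmetic guaranteeing that $p^f$ correctly counts the affected features is the delicate part. I would discharge this arithmetic by invoking the PatchGuard result~\cite{xiang2021patchguard} rather than re-deriving it, leaving only the clean reduction from ``pixels unchanged outside $\bfr$'' to ``features unchanged outside $\bfr^f$'' to verify directly.
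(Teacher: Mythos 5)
Your proof is correct and matches the paper's intended justification: the paper actually states Proposition~\ref{lemma1} \emph{without} proof, treating it as the defining contract of a correctly implemented $\textsc{Map}(\cdot)$ and deferring the ceiling/stride arithmetic of Equation~\ref{eqn-corruption} to the PatchGuard proof~\cite{xiang2021patchguard}, exactly as you do. Your explicit receptive-field locality step --- $\bfx^\prime$ agrees with $\bfx$ outside $\bfr$, hence $\bff^\prime$ agrees with $\bff$ outside $\bfr^f$, uniformly over all $\bfr\in\cR$ --- is precisely the reasoning the paper leaves implicit, so your argument fills in the details rather than diverging from them.
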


\textbf{Sub-procedure $\textsc{SO-Cert}(\cdot)$.} A correctly implemented $\textsc{SO-Cert}(\cdot)$ ensures that the following proposition is true.

\begin{proposition}[Correctness of $\textsc{SO-Cert}(\cdot)$]\label{lemma2}
    Given a correctly implemented $\textsc{SO-Cert}(\cdot)$, an input tensor $\bff$, the model $\bM_\lrf$, the secure operation parameter $\cM$, the threat model $\cA_\cR^f$, and the ground-truth label $y$, if $\textsc{SO-Cert}(\bff,\bM_\lrf,\cM,\cA_\cR^f,y)$ returns \texttt{True}, we have certifiable robustness. Formally, we have $\textsc{SO-Cert}(\bff,\bM_\lrf,\cM,\cA_\cR^f,y)=\texttt{True} \implies \forall \bff^\prime\in\cA^f_\cR(\bff): \textsc{SO}(\bff^\prime,\bM_\lrf,\cM)= y$.
\end{proposition}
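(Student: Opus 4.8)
The plan is to reduce this proposition to the already-established correctness of PatchCleanser's double-masking certification~\cite{xiang2022patchcleanser}, exploiting the fact that this certification is agnostic to whether its input tensor is a raw image or an intermediate feature map. The phrase ``correctly implemented'' should be read as: $\textsc{SO-Cert}(\cdot)$ is instantiated as the double-masking certification of Appendix~\ref{apx-pc}, which returns \texttt{True} for $(\bff,\bM_\lrf,\cM,\cA_\cR^f,y)$ only when two structural conditions hold. Writing $\cR^f$ for the set of feature-space corruption regions $\bfr^f$ underlying the threat model $\cA_\cR^f$, these are: (i) the mask set $\cM$ covers $\cR^f$, i.e.\ every region $\bfr^f\in\cR^f$ is fully contained in at least one mask $\bfm\in\cM$; and (ii) a two-mask agreement check succeeds, so that masking $\bff$ with every pair of masks from $\cM$ and querying $\bM_\lrf$ yields a unanimous prediction $y$.

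First I would recall PatchCleanser's correctness argument and verify that these two conditions are exactly its hypotheses. The core of that argument is a short case analysis: for any adversarial tensor $\bff^\prime\in\cA_\cR^f(\bff)$ the corruption is confined to a single region $\bfr^f\in\cR^f$; by the covering property (i) some mask $\bfm\in\cM$ contains $\bfr^f$, so the once-masked tensors $\bfm\odot\bff^\prime$ and $\bfm\odot\bff$ agree on every unmasked coordinate and are therefore indistinguishable to $\bM_\lrf$. Together with the two-mask agreement (ii), this forces the double-masking procedure $\textsc{SO}(\cdot)$ to emit $y$ on $\bff^\prime$. I would transcribe this analysis, substituting the feature tensor $\bff$ and the feature-space threat model $\cA_\cR^f$ for the image and its image-space counterpart.

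The one point requiring genuine (if light) justification, and the step I would treat most carefully, is that PatchCleanser's proof never invokes any property specific to natural images: it manipulates the input purely as an abstract tensor equipped with a masking operation $\odot$ and a threat model consisting of a single corruption region drawn from a region set. Since $\textsc{Map}(\cdot)$ produces $\cA_\cR^f$ in exactly this form (Proposition~\ref{lemma1}), the covering relation and masking semantics transfer verbatim to feature space. The main obstacle is thus not the logical skeleton --- which is inherited wholesale --- but verifying this abstraction match: confirming that feature-space masking zeroes out regions with the same information-erasing semantics PatchCleanser assumes, and that $\cR^f$ genuinely consists of single regions each coverable by some $\bfm\in\cM$. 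Discharging this correspondence is precisely what the hypothesis ``correctly implemented'' is meant to grant.
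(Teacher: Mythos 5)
Your proposal is correct and matches the paper's treatment of this statement: the paper does not prove Proposition~\ref{lemma2} at all, but states it as the defining contract of a \emph{correctly implemented} $\textsc{SO-Cert}(\cdot)$, and for the double-masking instantiation it defers---exactly as you do---to PatchCleanser's original certification proof, generalized in Appendix~\ref{apx-pc} simply by replacing the input image $\bfx$ with an image-or-feature tensor $\bff$ and the image-space threat model with $\cA_\cR^f$. Your identification of the $\cR$-covering check plus the exhaustive two-mask agreement loop as the hypotheses under which $\textsc{SO-Cert}(\cdot)$ returns \texttt{True} (matching Algorithm~\ref{alg-pc-certification}), and of the tensor-agnostic masking identity $\bfm\odot\bff^\prime=\bfm\odot\bff$ for any mask $\bfm$ covering the single corruption region as the only step needing fresh justification, is precisely the content the paper's ``correctly implemented'' hypothesis is meant to encapsulate.
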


\textbf{Certification procedure $\textsc{PCURE-Certify}(\cdot)$.} With the two propositions discussed above, we can state and prove the correctness of the certification procedure below.
\begin{theorem}\label{thm}
    Given a clean image $\bfx$, its ground-truth label $y$, the \framework defense setting $\bM_\srf,\bM_\lrf,\cM$, the image-space patch threat model $\cA_\cR$, and correctly implemented $\textsc{Map}(\cdot)$ and $\textsc{SO-Cert}(\cdot)$ procedures (Propositions~\ref{lemma1} and \ref{lemma2}), if the certification procedure $\textsc{PCURE-Certify}(\cdot)$ returns \texttt{True}, we have certifiable robustness for this clean image. Formally, we have $\textsc{PCURE-Certify}(\bfx,y,\bM_{\srf},\bM_\lrf,\cM,\cA_\cR)=\texttt{True}\implies \forall \bfx^\prime\in\cA_\cR(\bfx): \textsc{PCURE-Infer}(\bfx^\prime,\bM_{\srf},\bM_\lrf,\cM)= y$
\end{theorem}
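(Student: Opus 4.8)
The plan is to prove the implication by directly chaining Propositions~\ref{lemma1} and~\ref{lemma2}, after unrolling the two procedures to expose a common feature-space quantity. First I would observe that, by the definition of $\textsc{PCURE-Certify}(\cdot)$, the hypothesis ``the certification returns \texttt{True}'' is exactly the statement that $\textsc{SO-Cert}(\bff,\bM_\lrf,\cM,\cA_\cR^f,y)=\texttt{True}$, where $\bff=\bM_\srf(\bfx)$ is the clean feature map and $\cA_\cR^f=\textsc{Map}(\bM_\srf,\cA_\cR)$ is the mapped feature-space threat model. Symmetrically, by the definition of $\textsc{PCURE-Infer}(\cdot)$, the conclusion to be established for an arbitrary $\bfx^\prime\in\cA_\cR(\bfx)$ reduces to $\textsc{SO}(\bM_\srf(\bfx^\prime),\bM_\lrf,\cM)=y$. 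Thus the whole statement collapses to a claim about the secure operation acting on feature maps, and the SRF sub-model enters only through the feature maps it produces.

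The key steps then proceed in order. From the hypothesis and Proposition~\ref{lemma2}, I would deduce the feature-space guarantee $\forall\,\bff^\prime\in\cA_\cR^f(\bff):\textsc{SO}(\bff^\prime,\bM_\lrf,\cM)=y$; call this property (*). Next I would fix an arbitrary adversarial image $\bfx^\prime\in\cA_\cR(\bfx)$ and set $\bff^\prime=\bM_\srf(\bfx^\prime)$. Proposition~\ref{lemma1}, applied with this same $\bfx$, $\bM_\srf$, and $\cA_\cR$, yields $\bff^\prime\in\cA_\cR^f(\bff)$ --- that is, the adversarial feature map lands inside the very set $\cA_\cR^f(\bff)$ over which (*) quantifies. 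Instantiating (*) at this $\bff^\prime$ gives $\textsc{SO}(\bff^\prime,\bM_\lrf,\cM)=y$, which is precisely $\textsc{PCURE-Infer}(\bfx^\prime,\bM_\srf,\bM_\lrf,\cM)=y$. Since $\bfx^\prime$ was arbitrary, the universal conclusion follows, completing the implication.

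The mathematical content here is light, so the thing to guard against is bookkeeping rather than a deep obstacle. The one genuine subtlety is that the feature-space threat model $\cA_\cR^f(\bff)$ is anchored at the \emph{clean} feature map $\bff=\bM_\srf(\bfx)$ computed inside $\textsc{PCURE-Certify}(\cdot)$, whereas $\textsc{PCURE-Infer}(\cdot)$ recomputes a fresh feature map $\bM_\srf(\bfx^\prime)$ from the adversarial input; the argument only closes because Proposition~\ref{lemma1} guarantees these two quantities are related by the membership $\bM_\srf(\bfx^\prime)\in\cA_\cR^f(\bM_\srf(\bfx))$. I would therefore make explicit that the anchor $\bff$ is identical in the hypothesis of (*) and in the application of Proposition~\ref{lemma1}, so that the containment and the per-feature guarantee refer to the same set. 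No assumption about the internal structure of $\textsc{SO}$, $\bM_\srf$, or $\bM_\lrf$ is needed beyond the two propositions, which is exactly what makes the certification framework modular across different secure-operation instantiations.
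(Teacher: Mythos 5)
Your proof is correct and takes essentially the same route as the paper's: unroll $\textsc{PCURE-Certify}(\cdot)$ and $\textsc{PCURE-Infer}(\cdot)$ into $\textsc{SO-Cert}(\cdot)$ and $\textsc{SO}(\cdot)$, then chain Proposition~\ref{lemma1} (adversarial features land in $\cA_\cR^f(\bff)$) with Proposition~\ref{lemma2} (the feature-space guarantee) to conclude. Your explicit remark that the anchor $\bff=\bM_\srf(\bfx)$ must be the same clean feature map in both propositions is a point the paper leaves implicit, but it is the same argument.
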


\begin{proof}

The correctness of two sub-procedures $\textsc{Map}(\cdot)$ and $\textsc{SO-Cert}(\cdot)$ gives us two useful propositions.
Proposition~\ref{lemma1} demonstrates that $\forall \bfx^\prime\in \cA_\cR(\bfx),\bff^\prime=\bM_\srf(\bfx^\prime)$, we have $\bff^\prime\in\cA_\cR^f(\bff)$; Proposition~\ref{lemma2} demonstrates that $\forall \bff^\prime\in\cA^f_\cR(\bff)$, we have $\textsc{SO}(\bff^\prime,\cdot)= y$, as long as $\textsc{SO-Cert}(\cdot)$ returns \texttt{True}. Combining two propositions together, we can derive that, $\forall \bfx^\prime\in \cA_\cR(\bfx), \bff^\prime=\bM_\srf(\bfx^\prime)$ we have $\textsc{SO}(\bff^\prime,\cdot)= y$, as long as $\textsc{SO-Cert}(\cdot)$ returns \texttt{True}. 

According to Lines~\ref{ln-certify-s}-\ref{ln-certify-e} of Algorithm~\ref{alg-all}, $\textsc{PCURE-Certify}(\cdot)$ returns \texttt{True} iff $\textsc{SO-Cert}(\cdot)$ returns \texttt{True}. Moreover, Lines~\ref{ln-infer-s}-\ref{ln-infer-e} show that $\textsc{PatchCURE-Infer}(\bfx^\prime,\cdot)= \textsc{SO}(\bff^\prime,\cdot)$. Therefore, we have proved that $\forall \bfx^\prime\in \cA_\cR(\bfx), \textsc{PCURE-Infer}(\bfx^\prime,\cdot)= y$, as long as $\textsc{PCURE-Certify}(\cdot)$ returns \texttt{True}.
\end{proof}

\textbf{Remark 1: certifiable robustness evaluation.} In our evaluation (Section~\ref{sec-eval}), we will apply the $\textsc{PCURE-Certify}(\cdot)$ procedure to labeled datasets and report \textit{certified robust accuracy} as the fraction of test images for which $\textsc{PCURE-Certify}(\cdot)$ returns \texttt{True}. This certified robust accuracy is our robustness evaluation metric.

\textbf{Remark 2: adaptive attacks vs. certifiable robustness.} Theorem~\ref{thm} ensures that the certified robust accuracy discussed above covers all possible attackers within a given threat model, i.e., $\forall \bfx^\prime\in \cA_\cR(\bfx)$. The threat model $\cA_\cR$ can capture an adaptive attacker who has full knowledge of the defense and uses a patch of a certain shape and size at all possible locations and with all possible patch content. Therefore, we can view certified robust accuracy as a lower bound of model accuracy against any (adaptive) attack within the threat model $\cA_\cR$. For example, if we consider an attacker who can use a 2\%-pixel square patch with any patch content at any image location, a certified robust accuracy of 61.6\% means that no adaptive attacker using the same patch shape and size can reduce the model accuracy below 61.6\% (guaranteed by Theorem~\ref{thm}). With this theoretical guarantee, our evaluation focuses on certified robustness instead of empirical robustness against \textit{concrete} adaptive attack algorithms.

\textbf{Remark 3: certification and inference procedures.} We note that our certification procedure $\textsc{PCURE-Certify}(\cdot)$ requires ground-truth labels to check the correctness of the model prediction for robustness evaluation. In contrast, our inference procedure $\textsc{PCURE-Infer}(\cdot)$ does not require a ground-truth label and thus can be deployed in the wild. The certifiable robustness evaluated on a labeled dataset (using the certification procedure) provides a robustness estimation for the model (inference procedure) deployed in the wild.

\subsection{\framework Instantiation}\label{sec-defense-discussion}

In this subsection, we discuss how to instantiate \framework with different parameter settings to approach the three-way trade-off problem. 

\textbf{\framework parameters.} There are four major parameters for the \framework defense. The \textit{first} is the splitting layer index $\splitk$ used in model splitting $\textsc{Split}(\cdot)$. As we choose a larger $\splitk$, we will split the model at a deeper layer, which could improve inference efficiency but could also make the LRF sub-model too shallow to achieve high utility for the end-to-end defense model. The \textit{second} parameter is the base model architecture $\bM_{\text{b}}$ used to construct SRF and LRF models. Different architectures usually have different properties. For example, ViT is shown more robust to occlusion~\cite{he2022masked}, which can be viewed as a non-adversarial pixel patch. The \textit{third} parameter is the receptive field size $\mathbf{rf}$, larger receptive fields can improve model utility but might hurt the certifiable robustness~\cite{xiang2021patchguard}. The fourth parameter is the parameter(s) of the secure operation layer; it controls the properties of the secure operation algorithm. 

Next, we focus on the model splitting parameter $\splitk$, which is the unique parameter introduced by our \framework framework, and discuss three different choices of $k$ for \framework instantiation. 
Recall that $k\in\{0,1,\cdots,L\}$, where $L$ is the number of all model backbone layers (excluding feature aggregation and classification layers), specifies the layer where we split the model and insert the secure operation.

\textbf{Case 1: Optimizing for computation efficiency. ($k=L$).} First, we aim to build efficient defense instances by setting $\splitk$ to its largest value $L$. In this case, we convert the entire model backbone into an SRF model and instantiate the LRF model as a combination of global average pooling plus a linear classification head. The computation overhead of the LRF model is much smaller than the SRF model (e.g., 500$\times$ difference). Even dozens of LRF calls generated from the secure operation layer will not have a significant impact on the model inference efficiency. 

\textit{Remark.} In this case, we can consider \framework reduced to a pure SRF defense like PatchGuard~\cite{xiang2021patchguard}. Nevertheless, we note that \framework's flexibility of combining different modules naturally leads to the new idea of applying an LRF-based secure operation, e.g., double-masking~\cite{xiang2022patchcleanser}, to an SRF-based feature map. In Figure~\ref{fig-main-comparison}, we have seen the benefits of this new combination: we build efficient \framework instances that outperform all but one existing (inefficient) defense in terms of robustness.

\textbf{Case 2: Optimizing for model utility and robustness ($k=0$).} In the second case, we set $k=0$ and reduce \framework to a pure LRF defense like PatchCleanser~\cite{xiang2022patchcleanser}. \framework with $k=0$ can match the utility and robustness of any existing defense, at the cost of relatively large computation overheads.

\textbf{Case 3: Interpolation between efficient defenses with high-robustness defenses ($0< k<L$).} Our final case study aims to leverage \framework to systematically adjust model performance in terms of robustness, utility, and efficiency. Recall that we can build state-of-the-art efficient defenses using $k=L$ and state-of-the-art defenses (without efficiency considerations) using $k=0$. Now we want to further build defenses whose robustness-utility performance can fill the gap between efficient \framework (top stars in Figure~\ref{fig-main-comparison}) and the state-of-the-art PatchCleanser~\cite{xiang2022patchcleanser} (the pentagon in Figure~\ref{fig-main-comparison}; a special \framework instance with $k=0$). We note that this task used to be hard: existing LRF defenses like PatchCleanser~\cite{xiang2022patchcleanser} do not have a design point that has similar inference efficiency as undefended models; existing efficient SRF defenses like PatchGuard~\cite{xiang2021patchguard} cannot sacrifice part of their efficiency in trade for better utility and robustness. However, with our \framework design, we can easily reach different performance points by varying $k$ between $0$ and $L$, as demonstrated in Figure~\ref{fig-main-comparison} in Section~\ref{sec-intro} and more analyses in Section~\ref{sec-eval}. This flexibility of tuning defense performance is useful in practice when we want to find the optimal defense instances given utility and efficiency constraints.

\section{Evaluation}\label{sec-eval}
In this section, we present our evaluation results. We first introduce our evaluation setup. Next, we will demonstrate that \framework (1) guides us to build efficient defenses that outperform all but one prior (mostly inefficient) defense, and (2) allows us to flexibly adjust the robustness, utility, and efficiency of defense models. Finally, we provide detailed analyses of our defense properties, including the effect of different defense parameters. Our source code is available at \url{https://github.com/inspire-group/PatchCURE}


\subsection{Setup}
\textbf{Dataset.} We focus on the ImageNet-1k dataset~\cite{imagenet}. It contains 1.3M training images and 50k validation images from 1000 classes. ImageNet-1k is a challenging dataset that witnesses an intense three-way trade-off between robustness, utility, and efficiency~\cite{survey}. It was also widely used for evaluation in prior works~\cite{xiang2021patchguard,levine2020randomized,zhang2020clipped,bagcert,salman2022certified,li2022vip,ecvit} and thus enables an easy and fair comparison between different defenses. We consider an image resolution of 224$\times$224 for a fair inference efficiency comparison. We also include additional evaluation results for CIFAR-10~\cite{cifar}, CIFAR-100~\cite{cifar}, and SVHN~\cite{svhn} in Section~\ref{sec-eval-detail}.

\textbf{Model architectures.} We consider ViT-B~\cite{vit} and ResNet-50~\cite{resnet} as our non-robust base models, and build ViT-SRF and BagNet as their SRF variants.

\textit{ViT-B and ViT-SRF.} ViT-B~\cite{vit} has 12 attention layers; we split between different attention layers, i.e., we select $k\in\{0,1,\cdots,12\}$. Moreover, we consider the ViT-B architecture with an input image size of $224\times224$: each visual token accounts for $16\times16$ image pixels, and there are $14\times14$ visual tokens in total. We consider local attention with a sub-group of $2\times2$, $14\times1$, and $14\times2$ visual tokens, which correspond to a receptive field size of $32\times32$, $224\times 16$, $224\times 32$ pixels, respectively. We name a ViT-SRF instance that uses local attention over $m\times n$ visual tokens as ViT\{m\}x\{n\}. We name a \framework instance that splits at $k^\text{th}$ layer and instantiates SRF model with ViT\{m\}x\{n\} as PCURE-ViT\{m\}x\{n\}-k\{k\}. For example, PCURE-ViT14x1-k9 stands for a \framework instance that splits the vanilla ViT-B at the $\text{9}^\text{th}$ attention layer and uses ViT-SRF with 14x1 local attention. 

\textit{ResNet-50 and BagNet.} ResNet-50~\cite{resnet} has 50 layers in total, and we consider $k\in\{0,1,\cdots,50\}$. Moreover, we consider BagNet with different receptive field sizes of $17\times17$, $33\times33$, and $45\times45$ pixels. We name a \framework instance that uses BagNet with a receptive field size of $m\times m$ and splits at the $k^\text{th}$ layer as {PCURE-BagNet\{m\}-k\{k\}}. For example, PCURE-BagNet33-k50 stands for a \framework instance that splits the vanilla ResNet-50 at the $\text{50}^\text{th}$ layer and uses BagNet33 as the SRF sub-model.

\textbf{Model training.} We note that \framework models without the secure operation layer are standard feedforward networks. Therefore, we can leverage off-the-shelf training techniques and recipes to train our \framework models. For ViT-based models (ViT or ViT-SRF), we take the pretrained weight from MAE~\cite{he2022masked} and follow the recipe of MAE fine-tuning to tune a model for the classification task on ImageNet-1k. For ResNet-based models (ResNet or BagNet), we follow Schedule B of the ResNet-strikes-back paper~\cite{strikeback} and train the model from scratch. We note that both ViT-based and ResNet-based models only use ImageNet-1k without additional data for fair comparison. During the training, we add random masks to the feature map of the $k^\text{th}$ model backbone layer; this mimics the masking operation used in our LRF secure operation layer (e.g., double-masking~\cite{xiang2022patchcleanser}). After we train our \framework models, we add the secure operation layer back for robust inference.

\textbf{Attack threat model.} Our main evaluation results focus on \textit{one} square patch that takes up to 2\% of the image pixels and can be placed anywhere on the image, i.e., a 32$\times$32 patch anywhere on the 224$\times$224 image. This is a popular evaluation setting used in most existing works~\cite{xiang2021patchguard,levine2020randomized,zhang2020clipped,bagcert,salman2022certified,li2022vip,ecvit}. We will additionally analyze model performance for large patches (up to 70\% of the image pixels) in Section~\ref{sec-eval-detail}. As discussed in Section~\ref{sec-cert} (Remark 2), our robustness certification procedure has accounted for \textit{any} adaptive attack strategy within the constraint $\cA_\cR$ defined in Section~\ref{sec-formulation}. Therefore, we only need to specify the threat model $\cA_\cR$ (but not concrete attack algorithms) for robustness evaluation.

\textbf{Evaluation metrics.} We focus on the three-way trade-off problem between certifiable robustness, model utility, and computation efficiency; therefore, we have three major evaluation metrics. For certifiable robustness, we report \textit{certified robust accuracy}, defined as the fraction of test images that the certification procedure ($\textsc{PCURE-Certify}(\cdot)$ of Algorithm~\ref{alg-all}) returns \texttt{True}. This is a lower bound of model accuracy against any adaptive attack within the given threat model (recall Remark 2 in Section~\ref{sec-cert}). For model utility, we report \textit{clean accuracy}, which is the standard model accuracy on clean test images without adversarial patches. For computation efficiency, we report empirical \textit{inference throughput} on clean images, which is defined as the number of images a model can make predictions within every second (including data loading and model feedforward). We use a batch size of 4 for the throughput evaluation -- we find large batch sizes do not significantly affect throughput in our experiment setting (Appendix~\ref{apx-flops}). We also report latency, estimated FLOPs, and memory footprint of different defense models in Appendix~\ref{apx-flops}. We conduct our experiments using PyTorch~\cite{pytorch} on one NVIDIA RTX A4000 GPU. We also discuss the implication of different hardware settings and application scenarios in Section~\ref{sec-discussion}.

\textbf{Comparison with prior defenses.} We compare our results with all prior certifiably robust defenses that can recover correct predictions (without abstention) and are scalable to the ImageNet-1k dataset: Clipped BagNet (CBN)~\cite{zhang2020clipped}, De-Randomized Smoothing (DRS)~\cite{levine2020randomized}, PatchGuard~\cite{xiang2021patchguard}, BagCert~\cite{bagcert}, PatchCleanser~\cite{xiang2022patchcleanser}, Smoothed ViT (S-ViT)~\cite{salman2022certified}, ECViT~\cite{ecvit}, and ViP~\cite{li2022vip}. We note that S-ViT, ECViT, and ViP propose similar algorithms by applying DRS to ViT, we sometimes group them into ``DRS+ViT'' to simplify the figure legend. In Appendix~\ref{apx-reproduce}, we provide additional details of how we obtain evaluation results for prior defenses.

\begin{table}[t]
    \centering
    \caption{Performance of undefended models}
    \label{tab-undefended}
    \begin{tabular}{l|r|r|r}
    \toprule
     \multirow{2}{*}{Models}  &   \multicolumn{2}{c|}{Accuracy (\%)}  &Throughput \\
         &  clean & robust & (img/s)\\
         \midrule
         ViT-B~\cite{vit,he2022masked} &83.7&0&191.7\\
         ResNet-50~\cite{resnet}&80.1&0&295.5\\
         BagNet-17~\cite{bagnet} &67.3&0&192.0\\
         BagNet-33~\cite{bagnet} &73.0&0&192.2\\
         BagNet-45~\cite{bagnet} &74.7&0&184.5\\
         {ViT14x2}&79.4&0&195.2\\
         {ViT14x1}&77.4&0&190.4\\
         {ViT2x2}&72.7&0&166.7\\
         \bottomrule
    \end{tabular}
\end{table}

\begin{table}[t]
    \centering
    \caption{Performance of different defenses against one 2\%-pixel patch anywhere on the ImageNet images (parentheses contain the relative performance compared to state-of-the-art PatchCleanser~\cite{xiang2022patchcleanser})}
    \label{tab-main}
    \resizebox{\linewidth}{!}
    {\begin{threeparttable}
    \begin{tabular}{l|r|r|r}
    \toprule
     \multirow{2}{*}{Defenses\tnote{\dag}}  &   \multicolumn{2}{c|}{Accuracy (\%)}  &{Throughput }   \\
         &  \multicolumn{1}{c|}{clean} & \multicolumn{1}{c|}{robust} & (img/s)\\

         \midrule
        {PCURE-ViT14x2-k12} & 78.3 (0.95)&44.2 (0.72)&\textbf{189.9 (90.4)}\\
        {PCURE-ViT14x1-k12} & 76.3 (0.92)&47.1 (0.77)&\textbf{182.0 (86.7)}\\
        {PCURE-ViT2x2-k12} & 71.3 (0.86)&46.8 (0.77)&\textbf{158.1 (75.3)}\\
    {PCURE-BagNet17-k50} & 65.4 (0.79)&44.1 (0.72)&\textbf{115.1 (54.8)}\\
        {PCURE-BagNet33-k50} & 70.8 (0.85)&42.2 (0.69)&\textbf{136.8 (65.1)}\\
        {PCURE-BagNet45-k50} & 72.4 (0.88)&34.9 (0.57)&\textbf{132.5 (63.1)}\\
        \midrule
        {PCURE-ViT14x2-k11} & 79.1 (0.96)&45.9 (0.75)&109.0 (51.9)\\
        {PCURE-ViT14x2-k10} & 79.8 (0.97)&46.2 (0.75)&77.2 (36.8)\\
        {PCURE-ViT14x2-k9} & 80.0 (0.97)&46.7 (0.76)&58.4 (27.8)\\
        {PCURE-ViT14x2-k6} & 81.8 (0.99)&47.5 (0.78)&34.4 (16.4)\\
        {PCURE-ViT14x2-k3} & \textbf{82.1 (0.99)}&47.8 (0.78)&23.9 (11.4)\\
        \midrule
        {PCURE-ViT14x2-k0} & \textbf{82.2 (1.00)}&47.8 (0.78)&19.7 (9.4) \\
        {PCURE-ViT14x1-k0} & \textbf{82.5 (1.00)}&53.8 (0.88)&8.3 (4.0)\\
        {PCURE-ViT2x2-k0} & \textbf{82.5 (1.00)}&\textbf{61.6 (1.00)}&2.0 (1.0)\\
         \midrule
         PatchCleanser~\cite{xiang2022patchcleanser}  (robust)\tnote{\S} &\textbf{82.5 (1.00)}&\textbf{61.1 (1.00)}&2.1 (1.0)\\ 
         PatchCleanser~\cite{xiang2022patchcleanser} (efficient)&82.0 (0.99)&55.1 (0.90)&12.5 (5.9)\\
         ECViT~\cite{ecvit}&78.6 (0.95)&41.7 (68.2)&2.25 (1.1)\\
         ViP~\cite{li2022vip} (robust)&75.8 (0.92)&40.4 (66.1)&0.8 (0.4)\\
         ViP~\cite{li2022vip} (efficient)&75.3 (0.91)&38.3 (0.63)&7.7 (3.7)\\
         S-ViT~\cite{salman2022certified} (robust)&73.2 (0.89)&38.2 (0.63)&0.8 (0.4)\\
         S-ViT~\cite{salman2022certified} (efficient)&67.3 (0.82)&33.0 (0.54)&20.5 (9.7)\\
         PatchGuard~\cite{xiang2021patchguard}&54.6 (0.66)&26.0 (0.43)&\textbf{177.4} (\textbf{84.5})\\
         BagCert~\cite{bagcert}&45.3 (0.55)&22.7 (0.37)&\textbf{181.4} (\textbf{86.4})\\
         DRS~\cite{levine2020randomized}&44.4 (0.54)&14.0 (0.23)&1.5 (0.7)\\
         CBN~\cite{zhang2020clipped}&49.5 (0.60)&7.1 (0.11)&\textbf{181.5} (\textbf{86.4})\\
         \bottomrule
    \end{tabular}
          \begin{tablenotes}
      \item[$\dag$] For each prior defense without multiple instances, we report its most robust and most efficient instances, denoted with ``(robust)'' and ``(efficient)".
      \item[$\S$] The PatchCleanser numbers in its original paper are slightly higher because they use additional ImageNet-21k~\cite{imagenet} data for pretraining.
      
  \end{tablenotes}
    \end{threeparttable}}
\end{table}

\subsection{Main results}\label{sec-eval-main}

In Table~\ref{tab-undefended} and Table~\ref{tab-main}, we report the main evaluation results for undefended models and different defenses on the ImageNet-1k dataset against one 2\%-pixel square patch anywhere on the image. We note that many defenses have multiple defense instances; we report the performance of their most \textit{robust} and most \textit{efficient} instances if efficiency greatly affects their robustness performance. In Table~\ref{tab-main}, we additionally report the relative performance compared with the most robust prior defense, i.e., PatchCleanser~\cite{xiang2022patchcleanser}.

\textbf{\framework builds defense instances that have similar efficiency as undefended models.} First, we analyze efficient defense instances. As discussed in Section~\ref{sec-defense-discussion}, we will set $k=L$. We report performance for {PCURE-ViT-14x2-k12}, {PCURE-ViT-14x1-k12}, {PCURE-ViT-2x2-k12}, {PCURE-BagNet33-k50}, {PCURE-BagNet17-k50}, and {PCURE-BagNet45-k50} in Table~\ref{tab-main}. For ViT-based defenses, we can see that the inference throughput is similar to that of vanilla ViT-B (Table~\ref{tab-undefended}). For ResNet/BagNet-based defenses, we find that our defenses are 2$\times$ slower than ResNet-50. Nevertheless, our BagNet-based defenses are still faster than most prior works, and we still categorize them as efficient defenses. In addition to the high inference efficiency, \framework also achieves decent model utility and robustness. For example, PCURE-ViT14x2-12 has a 78.3\% clean accuracy for the challenging 1000-class ImageNet classification task and 44.2\% certified robust accuracy against one 2\%-pixel square patch anywhere on the image. 

\begin{figure*}
\centering
\begin{minipage}[b]{0.30\linewidth}
\includegraphics[width=\linewidth]{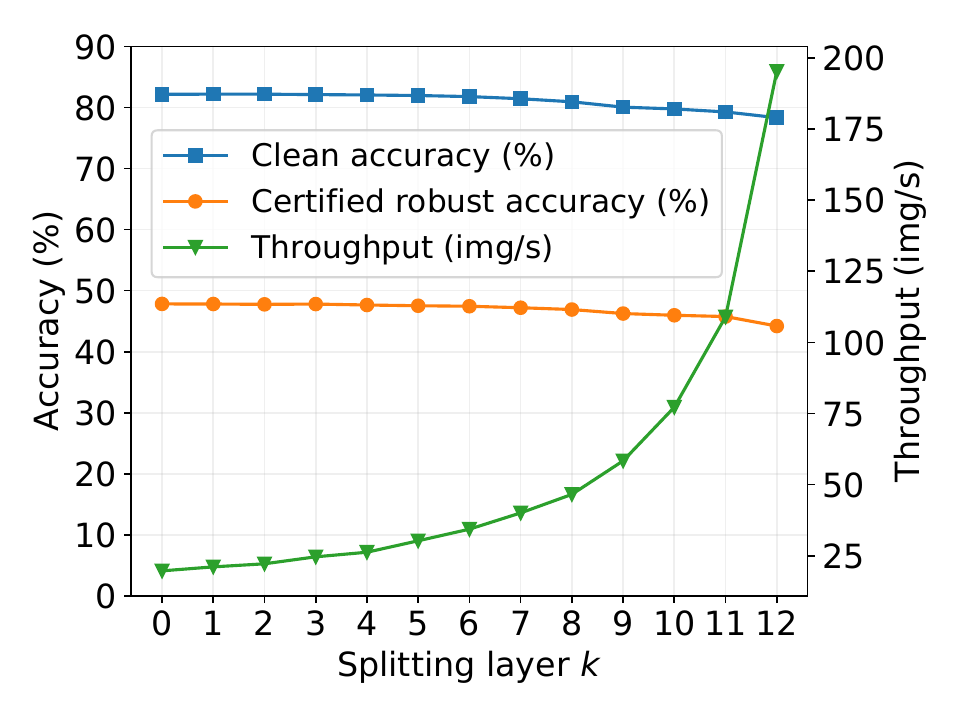}
    \caption{Effect of the splitting layer $k$ on {ViT14x2}-based \framework}
    \label{fig-splitk}
\end{minipage}%
\quad
\begin{minipage}[b]{0.34\linewidth}
\includegraphics[width=\linewidth]{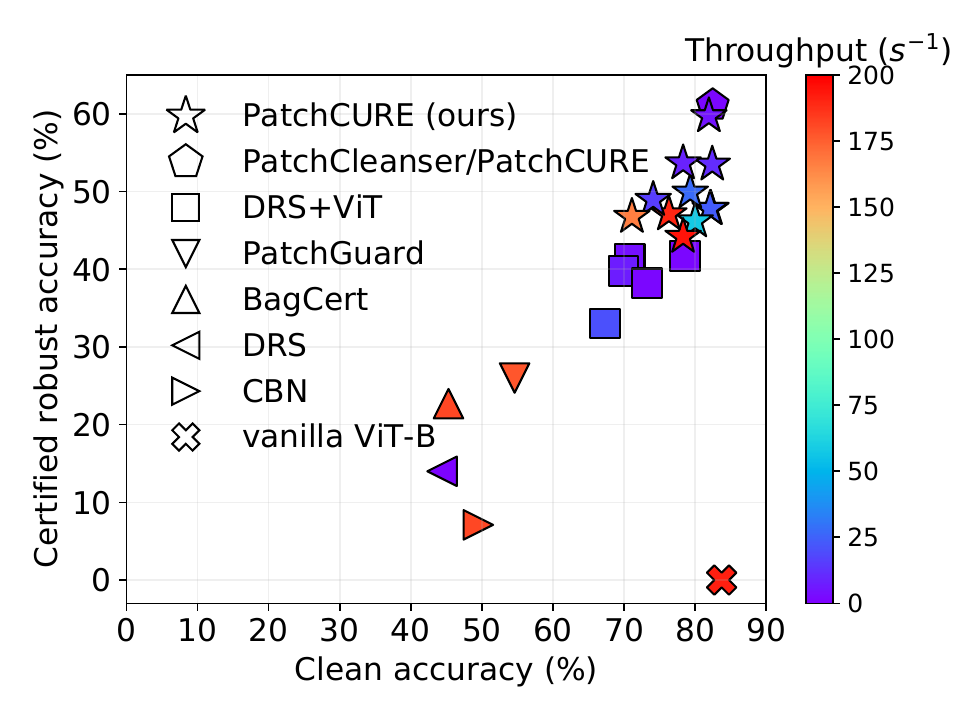}
    \caption{Comparison between \framework and prior defenses}
    \label{fig-comparison-eval}
\end{minipage}%
\quad
\begin{minipage}[b]{0.32\linewidth}
\includegraphics[width=\linewidth]{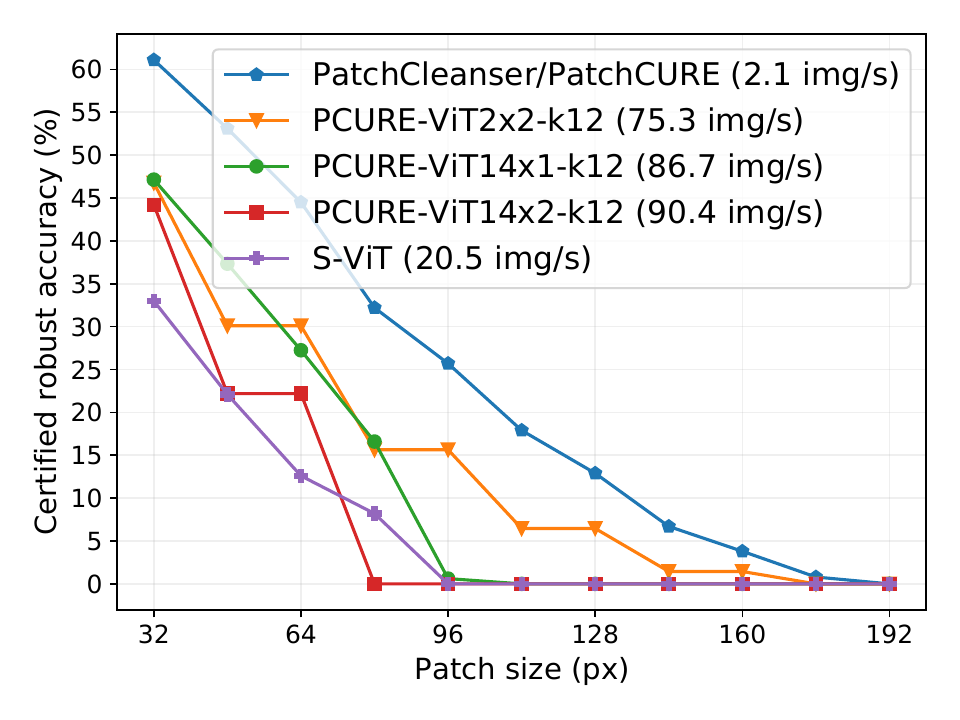}
    \caption{\framework robustness against larger patches}
    \label{fig-large-patch}
\end{minipage}%
\end{figure*}

\textbf{\framework provides a systematic way to balance the three-way trade-off between robustness, utility, and efficiency.} Second, we aim to demonstrate \framework's flexibility to balance defense performance by adjusting the parameters $k$. In Table~\ref{tab-main} and Figure~\ref{fig-splitk}, we report the performance of PCURE-ViT14x2 instantiated with different $k$. As we split the model at a deeper layer (larger $k$), the inference throughput greatly improves while the clean accuracy and the certified robust accuracy gradually decrease. This is because a deeper splitting point leads to a shallower LRF sub-model and thus reduces the cost of the secure operation layer (which calls the LRF sub-model multiple times). Meanwhile, a deeper splitting point slightly decreases the model capacity and leads to drops in clean accuracy and certified robustness accuracy.

\textbf{\framework builds efficient defenses that outperform all but one (mostly inefficient) prior defenses and can achieve the best robustness/utility by sacrificing part of the computation efficiency.} Now, we compare \framework with prior defenses. In addition to Table~\ref{tab-main}, we visualize the performance of different \framework instances and prior defenses in Figure~\ref{fig-comparison-eval}.\footnote{There are a large number of different \framework instances. Table~\ref{tab-main} and Figure~\ref{fig-comparison-eval} only report two partially overlapped subsets of \framework instances for simplicity and better visual appearance.} First, we can see that efficient \framework instances (e.g., the first three rows in Table~\ref{tab-main} and the red stars in Figure~\ref{fig-comparison-eval}) achieve state-of-the-art utility and robustness performance among defenses with small defense overheads. For example, compared to the best efficient defense PatchGuard~\cite{xiang2021patchguard}, our ViT-based defenses have 16.7-23.7\% absolute improvements in clean accuracy and 18.2-21.1\% absolute improvement in certified robust accuracy; our BagNet-based defenses have 10.8-17.8\% absolute clean accuracy improvements and 8.9-18.1\% absolute certified robust accuracy improvements. Moreover, we note that the robustness and utility performance of our efficient defenses is also comparable to, or even surpasses, many inefficient defenses. For example, our ViT-based defenses (red stars in Figure~\ref{fig-comparison-eval}) outperform defenses from the DRS-ViT family~\cite{salman2022certified,li2022vip,ecvit} (purple squares in Figure~\ref{fig-comparison-eval}). 

The only existing defense that significantly outperforms our efficient instances is PatchCleanser~\cite{xiang2022patchcleanser}. However, we note that PatchCleanser is simply a design point of \framework with $k=0$; \framework can easily reach the performance of PatchCleanser by sacrificing part of its computation efficiency. Figure~\ref{fig-comparison-eval} demonstrates that we can build a family of \framework instances (with different $k$ and SRF sub-models) that fill in the gap between PatchCleanser (purple pentagon) and efficient \framework instances (red stars). 
Moreover, we note that the \framework instances also achieve the best robustness and utility performance across all different efficiency levels (denoted by different colors in Figure~\ref{fig-comparison-eval}).
This \framework defense family allows us to easily find the optimal defenses that satisfy certain utility or computation constraints in practice.

\subsection{Detailed Analyses}\label{sec-eval-detail}

In this section, we provide additional analyses of our \framework defenses. We will discuss the properties of SRF models used in \framework, \framework performance when facing larger patches, \framework performance on additional datasets, and the effects of different \framework parameters.

\textbf{Our undefended SRF models have similar inference efficiency as off-the-shelf models with a small utility drop.} In Section~\ref{sec-srf}, we discussed how to build SRF models using off-the-shelf CNN and ViT. In Table~\ref{tab-undefended}, we report the performance of vanilla undefended SRF models. First, we can see that ViT-based models have similarly high computation efficiency as vanilla ViT models. For {PCURE-ViT14x2}, the inference throughput is even slightly higher than vanilla ViT. Moreover, we note that as we use a smaller receptive field (from 14x2 to 14x1 to 2x2 visual tokens), the clean accuracy of ViT-SRF decreases as expected; meanwhile, the inference throughput is also slightly affected because a smaller receptive field leads to more sub-groups of visual tokens, and each sub-group incurs additional computation costs (conversion between to the sub-group-style token layout and the original-style visual token layout). Second, for BagNet -- the ResNet-based SRF model -- we find that both clean accuracy and inference throughput are moderately affected. This is because smaller convolution kernels and strides result in a larger feature map and thus more computation. This is also why we find that the ViT-based \framework achieves better performance than the BagNet-based \framework in Section~\ref{sec-eval-main}. For the rest of this subsection, we will focus on analyzing ViT-based defenses.

\textbf{Different \framework instances exhibit different robustness against larger patches.} In Section~\ref{sec-eval-main}, we report certified robust accuracy for one 2\%-pixel square patch anywhere on the ImageNet image, because this is a popular evaluation setup for benchmark comparison~\cite{xiang2021patchguard,levine2020randomized,zhang2020clipped,bagcert,salman2022certified,li2022vip,ecvit}. In Figure~\ref{fig-large-patch}, we report the certified robust accuracy of different \framework instances for different patch sizes (up to 192$\times$192 on the 224$\times$224 image). As shown in Figure~\ref{fig-large-patch}, the certified robust accuracy decreases as the patch size increases. Notably, different defense instances have different sensitivity to larger patches. The most efficient defense, {PCURE-ViT14x2-k12}, is most sensitive to larger patches due to the relatively large receptive field size of its SRF model ({ViT14x2} has a receptive field of 224$\times$32). In contrast, {ViT2x2} has a smaller receptive field of 32$\times$32, and its \framework defense is more robust to larger patches. Moreover, we plot the results for PatchCleanser~\cite{xiang2022patchcleanser} and S-ViT~\cite{salman2022certified}, the two best-performing open-source defenses. First, we note that PatchCleanser can be viewed as a special \framework instance that uses an identical mapping layer (with a receptive field size of 1$\times$1) as the SRF sub-model. It has the best robustness to larger patches but smaller throughput. Second, we can also see that our efficient \framework instances beat the robustness of S-ViT in almost all cases, at a much larger inference throughput.

\begin{table}[t]
    \centering
    \caption{Defense performance of PCURE-ViT14x2-k6 for different datasets (one 2\%-pixel patch on the resized 224$\times$224 images)}
    \label{tab-cifar}
    \begin{tabular}{l|r|r|r}
    \toprule
        \multirow{2}{*}{Dataset} & \multicolumn{2}{c|}{Accuracy (\%)}  &Throughput \\
          & clean & robust & (img/s)\\
         \midrule
         CIFAR-10~\cite{cifar}&96.8&70.3&44.4\\
         CIFAR-100~\cite{cifar}&85.2&40.2&33.5\\
         SVHN~\cite{svhn}&95.7&33.8&39.8\\
         \bottomrule
    \end{tabular}
\end{table}

\textbf{\framework performs well on different datasets including CIFAR-10~\cite{cifar}, CIFAR-100~\cite{cifar}, and SVHN~\cite{svhn}.} In Section~\ref{sec-eval-main}, we demonstrate that \framework works well for the challenging ImageNet dataset. In this analysis, we aim to demonstrate that our \framework works for other benchmark datasets as well. In Table~\ref{tab-cifar}, we report the defense performance of PCURE-ViT14x2-k6 for three datasets CIFAR-10~\cite{cifar}, CIFAR-100~\cite{cifar}, and SVHN~\cite{svhn}. We resize the image to 224$\times$224 to fit the input size of the ViT model and report certified robust accuracy for one 2\%-pixel square patch anywhere on the image. As shown in the table, different \framework instances work well for different datasets. This further demonstrates the general applicability of our \framework defenses. In Appendix~\ref{apx-cifar}, we include additional evaluation results for different \framework instances and prior defenses on the CIFAR-10 dataset.

\textbf{\framework parameter selection.} As discussed in Section~\ref{sec-defense-discussion}, \framework has four parameters: $k$ (splitting layer), model architecture, SRF size, and the secure operation parameter. We need to understand their effect to properly select parameters in practice. The effect of the first parameter $k$ has been already discussed in Figure~\ref{fig-splitk} and Section~\ref{sec-eval-main}: it is the most important knob in \framework to balance the three-way trade-off. In practice, we can set $k$ to the largest value that meets the computation efficiency requirement. The effects of the model architecture and receptive field size have been implicitly discussed in our earlier experiments. For model architecture, Table~\ref{tab-undefended} demonstrates that ViT has an advantage over ResNet in their SRF model's utility and efficiency; this advantage is also reflected in the defense performance of ViT and ResNet-based defenses (Table~\ref{tab-main}). We attribute this difference to the recent findings that ViT is more robust to input masking~\cite{he2022masked}.
The effect of different receptive field sizes is also demonstrated in Table~\ref{tab-main} and Figure~\ref{fig-large-patch}. Table~\ref{tab-main} shows that ViT-SRF with a larger receptive field has higher clean accuracy and slightly better inference efficiency. However, larger receptive fields make the defenses more vulnerable to larger patches (Figure~\ref{fig-large-patch}). Therefore, we need to carefully choose the defense parameters based on the application scenarios. In practice, we can choose the smallest receptive field size that meets the model utility requirement to get optimal robustness. The effect of the secure operation parameter depends on the chosen secure operation algorithm; we provide a quantitative analysis in Appendix~\ref{apx-pc-exp}.

\begin{table}[t]
    \resizebox{\linewidth}{!}
      {
\begin{threeparttable}
    \centering
    \caption{\framework for \textit{attack detection} on ImageNet against one 2\%-pixel square patch}
    \label{tab-det}

       \begin{tabular}{l|r|r|r}
    \toprule
     \multirow{2}{*}{Models}  &   \multicolumn{2}{c|}{Accuracy (\%)}  &Throughput \\
         &  clean & robust & (img/s)\\
         \midrule
         {PCURE-ViT14x2-k12}&64.7&64.7&196.8\\
         {PCURE-ViT14x1-k12}&63.5&63.5&192.3\\
         {PCURE-ViT2x2-k12}&59.5&59.5&166.5\\
         \midrule
         ViT + MR~\cite{xiang2022patchcleanser,mccoyd2020minority}&74.3&74.3&2.5\\
         ViP~\cite{li2022vip}&74.6&74.6&2.5\\
         ScaleCert~\cite{han2021scalecert}&58.5&55.4&NA\tnote{\dag}\\
         PatchGuard++~\cite{xiang2021patchguard2}&49.8&49.8&151.1\\
         \bottomrule
         
    \end{tabular}
          \begin{tablenotes}
      \item[$\dag$] We did not evaluate  ScaleCert~\cite{han2021scalecert} due to the lack of open-source implementation.
  \end{tablenotes}
    \end{threeparttable}}
\end{table}

\section{Discussion}\label{sec-discussion}

In this section, we discuss \framework's compatibility with different secure operation algorithms, different model sizes, application scenarios, limitations,  and future work directions.

\textbf{Compatibility with different secure operation algorithms.} We propose \framework as a general defense framework with three modules: the SRF model, the LRF model, and the secure operation. In Section~\ref{sec-eval}, we extensively analyze different design choices of SRF and LRF models. Nevertheless, we keep secure operation the same 
as double-masking~\cite{xiang2022patchcleanser} because it is the state-of-the-art secure operation algorithm for prediction recovery. In this discussion, we aim to showcase \framework's compatibility with a different secure operation algorithm, namely the Minority Reports (MR) defense~\cite{mccoyd2020minority}. MR is another LRF technique that aims to detect an ongoing patch attack. It applies masks to all possible image locations and checks the consistency in model predictions on different masked images. If there is a unanimous consensus among all masked predictions, MR returns the agreed prediction label; otherwise, it issues an attack alert.\footnote{We note that this attack-detection-based defense has a weaker notion than prediction-recovery-based defense like \framework and PatchCleanser~\cite{xiang2022patchcleanser}.} In Table~\ref{tab-det}, we report the performance of \framework with MR-style secure operation and compare it with other attack-detection-based defenses. The table demonstrates that our efficient \framework instances achieve decent robustness and utility performance with high inference speed.

\textbf{\framework with different model sizes.} In Section~\ref{sec-eval}, we demonstrate that \framework can flexibly tune the model performance without significantly altering the model architecture and size. We note that \framework technique is also compatible with (and orthogonal to) strategies like using models with different sizes. In Table~\ref{tab-model-size}, we report an example using \framework with ViT-Base and ViT-Large. We can see that PCURE-ViT-B-14x2-k9 can match the robustness, utility, and throughput of PCURE-ViT-L-14x2-k24 (ViT-L has 24 attention layers in total), but with smaller memory consumption. This also demonstrates the benefits of introducing \framework's knobs. 
In practice, developers can combine \framework with orthogonal techniques like model compression and pruning.

\begin{table}[t]
    \centering
    \caption{\framework with ViT-Base and ViT-Large}
    \label{tab-model-size}
    \resizebox{\linewidth}{!}{
    \begin{tabular}{l|r|r|r|r}
    \toprule
Defense & Clean& Certified& Throughput& Memory (MB)\\
\midrule
 ViT-B-14x2-k12   & 78.3 & 44.2      & 189.9        & 362.0            \\
 ViT-B-14x2-k9     & 80.0 & 46.2      & 58.4          & 515.6            \\
 ViT-L-14x2-k24  & 80.3 & 47.7      &  58.0         & 1203.2           \\
\bottomrule
    \end{tabular}}

\end{table}


\textbf{\framework application scenarios.} In this paper, we focus on the general problem of adversarial patch attacks/defenses, which can be applied to many different applications like autonomous driving~\cite{rp2}, face authentication~\cite{wei2022adversarial}, surveillance~\cite{xu2020adversarial}, and cashierless self-checkout~\cite{hofman2023x}, which have different hardware settings and also different levels of sensitivity to latency. Our evaluation setting in Section~\ref{sec-eval} is more suited for cloud deployments since we use A4000, a common workstation GPU, for experiments. We demonstrate that \framework can have a similar inference speed as vanilla models under the same hardware setting (191.7 img/s for ViT-B vs. 189.9 img/s for PCURE-ViT14x2-k12). This implies that, in time-sensitive applications, \framework can meet the throughput/latency requirement as the vanilla model does. An interesting direction of future works is to concretely implement \framework with other model compression techniques for embedded edge devices.

\textbf{Limitations and future directions.} The certifiable robustness of \framework relies on its secure operation layer. As a result, \framework instances will inherit limitations from the secure operation algorithm. For example, if we use the Minority Reports algorithm~\cite{mccoyd2020minority}, we can only achieve robustness for attack detection. If we use the double-masking algorithm~\cite{xiang2022patchcleanser}, our defense requires additional patch information such as an estimation of patch size and shape. How to relax these assumptions is an important future work direction. Nevertheless, we note that \framework provides a systematic way to build defense instances with different properties. The compatibility of the framework implies its potential for future improvements: given any progress in the design of SRF models, LRF models, or secure operations, we can leverage these advancements to build stronger \framework defense instances. Moreover, another interesting direction is to incorporate \framework, a test-time defense, with training-time certified defenses~\cite{gowal2018effectiveness,mirman2018differentiable} to further improve robustness.

\section{Related Work}\label{sec-related-work}
\textbf{Adversarial Patch Attacks.} Brown et
al.~\cite{brown2017adversarial} introduced the first adversarial patch attack; they demonstrated that an attacker can use a physically printed adversarial patch to induce targeted misclassification. The physical realizability of patch attacks has drawn great attention from the machine learning security community. In a concurrent work, Karmon et al.~\cite{karmon2018lavan} explored a similar concept called the Localized and Visible Adversarial Noise (LaVAN) attack, which focuses on the digital-domain patch attack. Many more patch attacks have been proposed for different attack scenarios~\cite{yang2020patchattack,lovisotto2022give,liu2018dpatch,sehwag2018not,shan2021real,chen2022shape,xu2020adversarial,wu2019making}.
In this paper, we focus on attacks against image classification models.

\textbf{Adversarial Patch Defenses.} To defend against adversarial patch attacks, both empirically robust and certifiably robust defenses have been proposed. Empirical defenses~\cite{hayes2018visible,naseer2019local,wu2019defending,rao2020adversarial,Mu2021defending,saha2020role,chou2020sentinet,tarchoun2023jedi,liu2022segment,xu2023patchzero} are usually based on heuristics and lack formal robustness guarantees; in contrast, certifiably robust defenses~\cite{chiang2020certified,zhang2020clipped,levine2020randomized,xiang2021patchguard,xiang2022patchcleanser,bagcert,mccoyd2020minority,salman2022certified,li2022vip,ecvit} can claim robustness in a provable manner. Chiang et al.~\cite{chiang2020certified} proposed the first certifiably robust defense for patch attacks via Interval Bound Propagation (IBP)~\cite{gowal2018effectiveness,mirman2018differentiable}. This defense is too computationally intense to scale to the ImageNet dataset, so we did not include it in Section~\ref{sec-eval} for comparison. Zhang et al.~\cite{zhang2020clipped} proposed Clipped BagNet (CBN) that clips BagNet~\cite{bagnet} features for certifiable robustness. Levine et al.~\cite{levine2020randomized} proposed De-Randomized Smoothing (DRS) that performs majority voting on model predictions on different small cropped images. We note that DRS was further improved by Smoothed ViT~\cite{salman2022certified}, ECViT~\cite{ecvit}, and ViP~\cite{li2022vip} using the Vision Transformer (ViT)~\cite{vit} architecture. Xiang et al.~\cite{xiang2021patchguard} proposed PatchGuard shortly after DRS as a general defense framework that uses a model with small receptive fields (SRF) for feature extraction and performs secure feature aggregation for robust predictions. The idea of SRF is widely used in many defenses~\cite{levine2020randomized,salman2022certified,li2022vip,ecvit,bagcert,han2021scalecert}. Our \framework also leverages the SRF idea to improve computation efficiency. After PatchGuard, Metzen et al.~\cite{bagcert} proposed BagCert, in which they modified vanilla BagNet~\cite{bagnet} for robust predictions.  Xiang et al.~\cite{xiang2022patchcleanser} later proposed PatchCleanser with a double-masking algorithm that reliably removes the patch from the input image. In contrast to other works, PatchCleanser does not rely on SRF models and achieves state-of-the-art certifiable robustness while maintaining high model utility (e.g., 1\% drops from undefended models). In a concurrent work of this paper, PatchCleanser is further improved with a new model training technique~\cite{saha2023revisiting}. However, we note that PatchCleanser requires expensive model predictions on multiple masked images and incurs a large computation overhead. In this paper, we propose \framework to approach the three-way trade-off between robustness, utility, and efficiency. Moreover, we note that our general \framework design subsumes PatchCleanser by setting $k=0$.

In addition to the defenses discussed above, there are other certifiably robust defenses for attack detection~\cite{mccoyd2020minority,han2021scalecert,xiang2021patchguard2}. These defenses alert and abstain from making predictions when they detect an ongoing attack, which achieves a weaker robustness property compared to defenses that can recover correct predictions without any abstention. In Section~\ref{sec-discussion}, we demonstrated \framework's compatibility with one of the LRF-based attack-detection algorithms, Minority Reports~\cite{mccoyd2020minority}. Finally, there is another line of certifiably robust defenses that study harder vision tasks such as object detection~\cite{xiang2021detectorguard,xiang2023objectseeker} and semantic segmentation~\cite{yatsura2022certified}.
We refer the readers to a survey paper~\cite{survey} for more discussions on certified patch defenses.

\textbf{Other Adversarial Example Attacks and Defenses.} There are many existing paper studies adversarial example attacks defenses for different threat models~\cite{szegedy2013intriguing,biggio2013evasion,goodfellow2014explaining,papernot2016limitations,meng2017magnet,xu2017feature,carlini2017towards,madry2017towards,wong2017provable,cohen2019certified,xiang2019generating}. We focus on the adversarial patch attacks because they can be realized in the physical world and impose a threat to the cyber-physical world. 

\section{Conclusion}\label{sec-conclusion}
In this paper, we proposed \framework as a general defense framework to build certifiably robust defenses against adversarial patch attacks. \framework is the first to explicitly approach the three-way trade-off problem between certifiable robustness, model utility, and computation efficiency. We demonstrated that \framework enables us to build state-of-the-art efficient defense instances, and provides sufficient knobs to adjust the defense performance across the three dimensions. We note that \framework is a general defense framework compatible with any SRF and LRF model architectures as well as secure operation algorithms -- advancements in SRF/LRF model architectures and secure operations will also lead to stronger \framework defense instances.

\section*{Acknowledgements}%
We are grateful to the anonymous shepherd and reviewers at USENIX Security 2024 for their valuable feedback.
This work was supported in part by the National Science Foundation under grant CNS-2131938 and the Princeton SEAS Innovation Grant.

\bibliographystyle{plain}
\bibliography{reference.bib}

\appendix

\section{Notes on Reproducing Prior Defenses}\label{apx-reproduce}

In our evaluation, we report three performance metrics for each defense: clean accuracy, certified robust accuracy, and inference throughput. In this section, we provide additional details on how we obtain these performance metrics for prior defenses. In Table~\ref{tab-reproduce}, we categorize prior defenses into three groups based on their reproducibility. The first group (top rows of the table) contains defenses with open-source code and model weights. For this group, we directly adopt the publicly available source code for evaluation. The second group corresponds to defenses that do not have publicly available source code (nor the model weights), but their defense inference logic is easy for us to re-implement. We find that different model weights (random weights or the weights we trained) do not greatly affect the empirical inference throughput; therefore, we report the inference throughput evaluated based on our re-implementation and copy clean accuracy and certified robust accuracy from their original papers. The third group only contains the ScaleCert~\cite{han2021scalecert} defense. Since ScaleCert is for attack detection, which is not the focus of our main evaluation, we did not re-implement its defense logic nor report its inference throughput performance.

\begin{table}
    \centering
    \caption{Reproducibility of prior defenses}
    \label{tab-reproduce}
    \begin{tabular}{l|c|c}
    \toprule
        \multirow{2}{*}{Defenses} & \multicolumn{1}{l|}{Open-source} & \multicolumn{1}{l}{Re-implement} \\
         & code\&weights?&inference logic?\\
        \midrule
    PatchCleanser~\cite{xiang2022patchcleanser}     &\cmark  & \cmark \\
    PatchGuard~\cite{xiang2021patchguard}  &  \cmark  & \cmark \\
    PatchGuard++~\cite{xiang2021patchguard2}  & \cmark  & \cmark \\
    S-ViT~\cite{salman2022certified}   &  \cmark  & \cmark \\
    DRS~\cite{levine2020randomized}  &  \cmark  & \cmark \\
    CBN~\cite{zhang2020clipped}  &  \cmark  & \cmark \\
    MR~\cite{mccoyd2020minority}   & \cmark  & \cmark \\
    \midrule
    ECViT~\cite{ecvit}     & \xmark & \cmark \\
    ViP~\cite{li2022vip}  & \xmark & \cmark \\
    BagCert~\cite{bagcert}   & \xmark & \cmark \\
    \midrule
    ScaleCert~\cite{han2021scalecert}    & \xmark & \xmark \\
    \bottomrule
    \end{tabular}

\end{table}

\begin{table}[t]
    \centering
    \caption{Throughput results (img/s) with different batch sizes}
    \label{tab-efficiency-batch-size}
      \resizebox{\linewidth}{!}
    {
    \begin{tabular}{l|r|r|r|r}
    \toprule
     Batch size    &  1&4&8&32\\
         \midrule
 ViT-B    & 113.4& 190.6& 189.3&  199.9\\
 ViT14x2  & 111.5& 192.9& 193.8& 203.9\\
 PCURE-ViT14x2-k12& 110.7& 186.3& 188.9&  200.9\\
 PatchCleanser          & 2.0    & 2.1    & 2.1    & 2.1     \\
 S-ViT                         & 0.6    & 0.8    & 0.8       & 0.8 \\

         \bottomrule
    \end{tabular}}
\end{table}

\begin{table}[t]
    \centering
    \caption{Performance for different models and defenses}
    \label{tab-flops}
        \resizebox{\linewidth}{!}
    {
    \begin{tabular}{l|r|r|r|r|r|r}
    \toprule
     \multirow{2}{*}{Models}  &   \multicolumn{2}{c|}{Accuracy (\%)}  &\multicolumn{1}{c|}{Throughput} & \multicolumn{1}{c|}{Latency}&\multicolumn{1}{c|}{FLOP}&\multicolumn{1}{c}{Memory}\\
         &  clean & robust & (img/s)&(ms)&($\times10^9$)& (MB)\\
         \midrule
         ViT-B~\cite{vit} &83.7&0&191.7&7.5&17.6&360.6\\
         ResNet-50~\cite{resnet}&80.1&0&295.5&11.8&4.1&254.8\\
         BagNet-17~\cite{bagnet} &67.3&0&192.0&11.3&15.8&311.8\\
         BagNet-33~\cite{bagnet} &73.0&0&192.2&11.0&16.4&320.2\\
         BagNet-45~\cite{bagnet} &74.7&0&184.5&12.9&16.1&318.2\\
         {ViT14x2}&79.4&0&195.2&9.2&17.6&362.0\\
         {ViT14x1}&77.4&0&190.4&8.2&18.1&362.4\\
         {ViT2x2}&72.7&0&166.7&8.8&21.1&368.1\\
         \midrule
        {PCURE-ViT14x2-k12} & 78.3&44.2&{189.9}&10.3&17.6&362.0\\
        {PCURE-ViT14x1-k12} & 76.3&47.1&{182.0}&9.8&18.1&362.4\\
        {PCURE-ViT2x2-k12} & 71.3&46.8&{158.1}&9.7&21.1&368.2\\
        {PCURE-BagNet17-k4} & 65.4&44.1&{115.1}&14.5&17.1&723.2\\
        {PCURE-BagNet33-k4} & 70.8&42.2&{136.8}&13.1&17.5&517.4\\
        {PCURE-BagNet45-k4} & 72.4&34.9&{132.5}&14.8&17.2&465.9\\
        \midrule
        {PCURE-ViT14x2-k11} & 79.1&45.9&109.0&13.8&27.4&513.9\\
        {PCURE-ViT14x2-k10} & 79.8&46.2&77.2&16.5&38.6&511.7\\
        {PCURE-ViT14x2-k9} & 80.0&46.7&58.4&21.1&48.8&515.6\\
        {PCURE-ViT14x2-k6} & 81.8&47.5&34.4&32.4&76.3&517.9\\
        {PCURE-ViT14x2-k3} & 82.1&47.8&23.9&44.1&101.8&515.8\\
         \midrule
        {PCURE-ViT14x2-k0} & 82.2&47.8&19.7&54.3&92.0&515.8 \\
        {PCURE-ViT14x1-k0} & 82.5&53.8&8.3&117.2&217.5&764.9\\
        {PCURE-ViT2x2-k0} & 82.5&61.6&2.0&433.3&1084.0&2181.6\\
         \midrule
         PatchCleanser~\cite{xiang2022patchcleanser}  &82.5&61.1&2.1&460.6&1205.0&2302.8\\
         PatchCleanser~\cite{xiang2022patchcleanser}  &82.0&55.1&12.5&82.7&216.0&646.9\\
         ECViT~\cite{ecvit}&78.6&41.7&2.25&1593.9&920.8&344.5\\
         ViP~\cite{li2022vip} (robust)&75.8&40.4&0.8&1637.2&3938.4&363.7\\
         ViP~\cite{li2022vip} (efficient)&75.3&38.3&7.7&174.2&404.4&363.7\\
         S-ViT~\cite{salman2022certified} (robust)&73.2&38.2&0.8&1577.1&920.8&344.5\\
         S-ViT~\cite{salman2022certified} (efficient)&67.3&33.0&20.5&177.3&63.3&342.5\\
         PatchGuard~\cite{xiang2021patchguard}&54.6&26.0&{162.9}&11.5&17.5&311.8\\
         BagCert~\cite{bagcert}&45.3&22.7&164.2&11.6&17.5&310.8\\
         DRS~\cite{levine2020randomized}&44.4&14.0&1.5&2476.8&920.9&258.6\\
         CBN~\cite{zhang2020clipped}&49.5&7.1&{166.3}&11.6&17.5&311.8\\
         \bottomrule
    \end{tabular}
    }
\end{table}

\section{Additional Efficiency Experiments}\label{apx-flops}

\textbf{Throughput with different batch sizes.} In Section~\ref{sec-eval}, we use a batch size of 4 for throughput evaluation. In Table~\ref{tab-efficiency-batch-size}, we report additional throughput results with different batch sizes. We can see that the throughput does not significantly change with a larger batch in our experiment setting.

\textbf{FLOP and memory analyses.} In Table~\ref{tab-flops}, we augment our evaluation results (discussed in Table~\ref{tab-undefended} and Table~\ref{tab-main}) and include latency, FLOP counts, and GPU memory consumption. We use a batch size of one for per-image inference latency. We estimate FLOPs using the \texttt{fvcore} libarary.\footnote{\url{https://github.com/facebookresearch/fvcore/blob/main/docs/flop_count.md}} For GPU memory, we report the average of the maximum allocated GPU memory for each data batch. First, we can see that throughput and latency are (inversely) correlated since we use one GPU for both two experiments. Second, the FLOP count loosely correlates to the inference throughput: models with a smaller FLOP count usually have higher inference throughput. However, theoretically, it is possible that an algorithm that has fewer FLOPs but requires sequential execution can have a much larger latency than a parallelizable algorithm with more FLOPs when running on a GPU. Third, we can see a weaker connection between GPU memory consumption and inference throughput because memory consumption heavily depends on the implementation: for example, making inferences on masked images one by one consumes less memory than batched masked images. We can see that PCURE-ViT14x2-k12 and PCURE-ViT14x2-k11 have significantly different memory footprints because we used an optimized \framework implementation for pure SRF defenses; ECViT, ViP, and S-ViT have a low inference speed but similar memory consumption as vanilla ViT due to their special implementation. 


\section{PatchCleanser and Double-masking}\label{apx-pc}

\subsection{Algorithm Details}\label{apx-pc-alg}
In this section, we discuss PatchCleanser~\cite{xiang2022patchcleanser} and its double-masking algorithm, which can be used as our secure operation $\textsc{SO}(\cdot)$ in Algorithm~\ref{alg-all}. We generally follow the original presentation of the PatchCleanser paper~\cite{xiang2022patchcleanser}, with small modifications to keep its notations consistent with this paper.

\textbf{$\mathcal{R}$-covering mask set.} The most important parameter of the double-masking algorithm is an $\mathcal{R}$-covering mask set. We copied the definition of $\mathcal{R}$-covering from the original paper~\cite{xiang2022patchcleanser}. The mask $\bfm \in \cM$ has a similar definition as the patch region set $\bfr$ discussed in Section~\ref{sec-formulation}: it is a binary tensor with the same shape as the input image $H\times W$; elements that correspond to the mask region take the value of zeros and otherwise ones. 

\begin{definition}[$\mathcal{R}$-covering]A mask set $\cM$ is $\mathcal{R}$-covering if, for any patch in the patch region set $\cR$, at least one mask from the mask set $\cM$ can cover the entire patch, i.e., 
$$\forall\ \mathbf{r}\in \mathcal{R},\ \exists\ \mathbf{m} \in \mathcal{M} \st \mathbf{m}[i,j]\leq \mathbf{r}[i,j], \  \forall (i,j)$$
\end{definition}

\begin{algorithm}[t]
    \centering
    \caption{Inference procedure of double-masking~\cite{xiang2022patchcleanser}}\label{alg-pc-inference}
    \begin{algorithmic}[1]
    \renewcommand{\algorithmicrequire}{\textbf{Input:}}
    \renewcommand{\algorithmicensure}{\textbf{Output:}}
    \Require Image or intermediate feature map $\mathbf{f}$, LRF model $\mathbb{M}_\lrf$, $\cR$-covering mask set $\mathcal{M}$
    \Ensure  Robust prediction $\bar{y}$ 
    \Procedure{SO}{$\mathbf{f},\mathbb{M}_\lrf,\mathcal{M}$}
    \State $\bar{y}_{\text{maj}},\mathcal{P}_{\text{dis}}\gets\textsc{MaskPred}(\bff,\mathbb{M}_\lrf,\mathcal{M})$
    \If{$\mathcal{P}_{\text{dis}}=\varnothing$}\label{ln-first-mask-empty-condition} \label{ln-one-mask-agreement}
    \State \Return $\bar{y}_{\text{maj}}$ \Comment{\textit{Case I: agreed prediction}}\label{ln-case-1}
    \EndIf  \label{ln-first-mask-e}
    
    \For{each $(\mathbf{m}_{\text{dis}},\bar{y}_{\text{dis}})\in\mathcal{P}_{\text{dis}}$}\label{ln-double-masking-s}\Comment{Second-rnd. mask}
       \State $\bar{y}^\prime,\mathcal{P}^\prime\gets\textsc{MaskPred}(\bff\odot\mathbf{m}_{\text{dis}},\mathbb{M}_\lrf,\mathcal{M})$ \label{ln-double-masking}
       \If{$\mathcal{P}^\prime = \varnothing$}\label{ln-two-condition-s} 
       \State \Return $\bar{y}_{\text{dis}}$ \Comment{\textit{Case II: disagreer prediction}}\label{ln-case-2}
       \EndIf\label{ln-two-condition-e}
    \EndFor\label{ln-double-masking-e}
    \State\Return $\bar{y}_{\text{maj}}$\Comment{\textit{Case III: majority prediction}}\label{ln-case-3}
    \EndProcedure
\item[]
    \Procedure{MaskPred}{$\bff,\mathbb{M}_\lrf,\mathcal{M}$}
    \State$\mathcal{P}\gets \varnothing$\Comment{A set for mask-prediction pairs}
    \For{$\mathbf{m}\in\mathcal{M}$} \Comment{Enumerate every mask $\mathbf{m}$}
    \State $\bar{y}\gets \mathbb{M}_\lrf(\bff\odot\mathbf{m})$ \Comment{Evaluate masked prediction}
    \State $\mathcal{P}\gets\mathcal{P}\ \bigcup \ \{(\mathbf{m},\bar{y})\}$ \Comment{Update set $\mathcal{P}$}
    \EndFor
    \State $\bar{y}_{\text{maj}}\gets\arg\max_{{y}^*}|\{(\mathbf{m},\bar{y})\in\mathcal{P}\ |\ \bar{y}=y^*\}|$\label{ln-get-maj-pred}\Comment{Majority}
    \State $\mathcal{P}_{\text{dis}}\gets\{(\mathbf{m},\bar{y})\in\mathcal{P} \ | \ \bar{y}\neq\bar{y}_{\text{maj}}\}$\label{ln-get-dis-set}\Comment{Disagreers}
    \State\Return $\bar{y}_{\text{maj}},\mathcal{P}_{\text{dis}}$
    \EndProcedure
    \end{algorithmic}

\end{algorithm}

\begin{algorithm}[t]
    \centering
    \caption{Certification procedure of double-masking~\cite{xiang2022patchcleanser}}\label{alg-pc-certification}
    \begin{algorithmic}[1]
    \renewcommand{\algorithmicrequire}{\textbf{Input:}}
    \renewcommand{\algorithmicensure}{\textbf{Output:}}
    \Require Image or intermediate feature map $\bff$, ground-truth label $y$, LRF model $\mathbb{M}_\lrf$, mask set $\cM$, feature-space threat model $\cA_\cR^f$
    \Ensure  Whether $\bff$ has certified robustness
    \Procedure{Cert-SO}{$\bff,\mathbb{M}_\lrf,\cM,\cA_\cR^f,y$}
    \If{$\cM$ is not $\cR$-covering}\Comment{Insecure mask set} \label{ln-check-lemma}
    \State\Return \texttt{False}
    \EndIf
    \For{every $(\mathbf{m}_0,\mathbf{m}_1) \in \mathcal{M}\times\mathcal{M}$} \label{ln-for-s}
    \State $\bar{y}^\prime \gets \mathbb{M}_\lrf(\bff\odot\mathbf{m}_0\odot\mathbf{m}_1)$\Comment{Two-mask pred.}
    \If{$\bar{y}^\prime \neq y$}
    \State\Return \texttt{False} \Comment{Possibly vulnerable}
    \EndIf
    \EndFor\label{ln-for-e}
    \State\Return \texttt{True}\Comment{Certified robustness!}
    \EndProcedure
    \end{algorithmic}
\end{algorithm}

\textbf{Mask set generation.} The PatchCleanser~\cite{xiang2022patchcleanser} discussed a systematic way to generate a mask set $\cM$ and adjust the mask set size $|\cM|$. For simplicity, we only discuss the procedure using a 1-D ``image'' example. To generate a mask set, PatchCleanser moves a mask over the input image. Formally, let us consider using a mask of width $m$ over an image of size $n$. The first mask is placed at the image coordinate of $0$ and thus covers indices from $0$ to $m-1$. Next, the mask is moved with a stride of $s$ across different image locations $\{0,s,2s,\cdots,\lfloor \frac{n-m}{s} \rfloor s\}$. Finally, the last mask is placed at the index of $n-m$ in case the mask at $\lfloor \frac{n-m}{s} \rfloor s$ cannot cover the last $m$ pixels. PatchCleanser then define a mask set $\cM_{m,s,n}$ as:
\begin{align}
    \cM_{m,s,n}=\{\mathbf{m}\in\{0,1\}^n \ | \ \mathbf{m}[u]=0, u \in [i,i+m);& \nonumber \\        
     \mathbf{m}[u]=1, u \not\in [i,i+m);&\ i\in \mathcal{I}\}\nonumber\\
     \mathcal{I}=\{0,s,2s,\cdots,\lfloor \frac{n-m}{s} \rfloor s\}\ \bigcup \ \{n-m\}\label{eqn-mask-set-generation}
\end{align}
The mask set size can be computed as:
\begin{equation}\label{eqn-mask-size}
  |\cM_{m,s,n}| = |\mathcal{I}| = \lceil\frac{n-m}{s}\rceil+1  
\end{equation}
To adjust the mask set size, PatchCleanser proposed to change the mask stride $s$. To ensure the $\cR$-covering property, PatchCleanser proved that the mask size needs to be at least $m=p+s-1$ to cover patches no larger than the size of $p$. 
In our \framework implementation, we set $s=1$ by default. We analyze the effect of stride $s$ in Appendix~\ref{apx-pc-exp}.

\textbf{Inference procedure.} We present the inference procedure of the double-masking algorithm $\textsc{SO}(\cdot)$ in Algorithm~\ref{alg-pc-inference}. Compared to the original double-masking algorithm (Algorithm 1 of \cite{xiang2022patchcleanser}), we generalize its input image $\bfx$ as input image or intermediate feature map $\bff$ and replace its vanilla model as our LRF model $\bM_\lrf$.

\textbf{Certification procedure.} We present the certification procedure $\textsc{Cert-SO}(\cdot)$ in Algorithm~\ref{alg-pc-certification}. Compared to the original double-masking algorithm (Algorithm 2 of \cite{xiang2022patchcleanser}), we generalize its input image $\bfx$ as input image or intermediate feature map $\bff$, replace its vanilla model as our LRF model $\bM_\lrf$, and consider the feature-space threat model $\cA_\cR^f$.

\begin{figure}[t]
    \centering
    \includegraphics[width=0.8\linewidth]{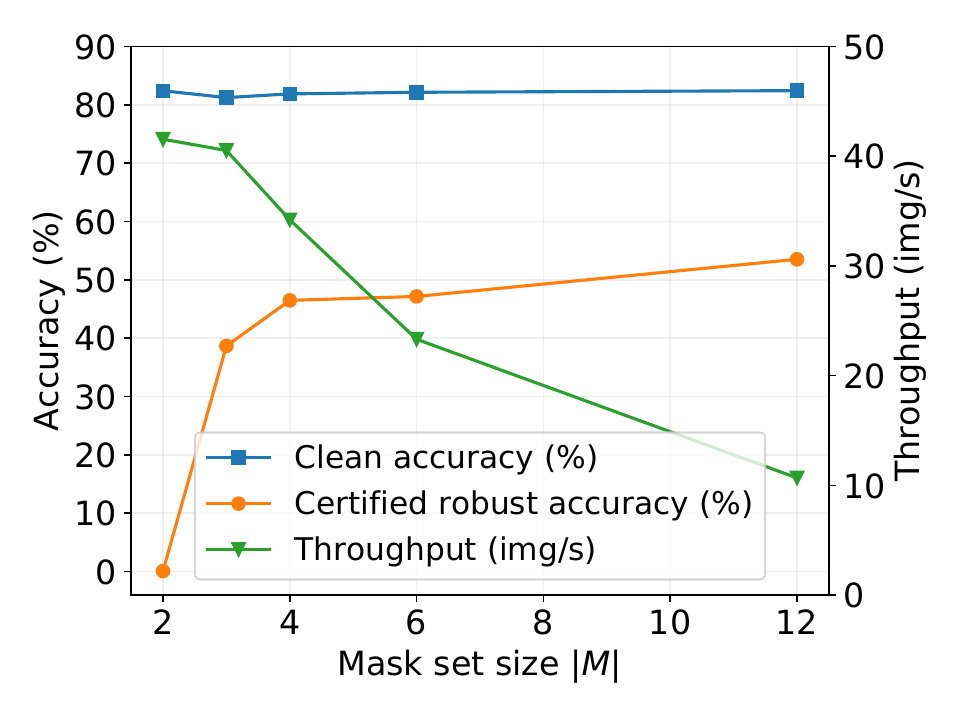}
    \caption{Effect of mask set sizes on {PCURE-ViT14x1-k3}}
    \label{fig-num-mask}
\end{figure}

\begin{table}[t]
    \centering
    \caption{Defense performance of PCURE-ViT2x2-k12 against \textit{two} 32$\times$32 patches on the 224$\times$224 ImageNet images}
    \label{tab-two-patch}
    \begin{tabular}{l|r|r|r}
    \toprule
        \multirow{2}{*}{Defense} & \multicolumn{2}{c|}{Accuracy (\%)}  &Throughput \\
          & clean & robust & (img/s)\\
         \midrule
PCURE-ViT2x2-k12&72.0&34.9&164.6\\
         \bottomrule
    \end{tabular}
\end{table}

\subsection{Additional Experimental Analyses}\label{apx-pc-exp}
In this subsection, we provide two analyses of \framework with different settings of double-masking.

\textbf{The mask stride $s$ and mask set size $|\cM|$ balances the trade-off between efficiency and robustness/utility.} In our double-masking-based \framework implementation, we set the mask stride $s=1$ by default. In Figure~\ref{fig-num-mask}, we vary the stride $s$ and the mask set size $|\cM|$ (recall $|\cM|=\lceil\frac{n-m}{s}\rceil+1$ in Equation~\ref{eqn-mask-size}) and analyze their impact on the defense performance. As shown in the figure, a smaller number of masks improves the inference throughput but hurts the clean accuracy and certified robust accuracy. This is because a smaller mask set requires to use larger masks to ensure robustness, and makes it hard for the model to make accurate predictions on masked features/images.

\textbf{\framework with double-masking works well for two adversarial patches.} In the main body of this paper, we focus on the problem of \textit{one} adversarial patch anywhere on the image. This is a challenging threat model that allows attackers to choose arbitrary patch locations and arbitrary patch content, and it is also a popular setting used in prior papers~\cite{xiang2021patchguard,levine2020randomized,zhang2020clipped,bagcert,salman2022certified,li2022vip,ecvit} for robustness comparison. In this analysis, we aim to demonstrate that \framework can also work for attackers that use multiple patches. We leverage the strategy discussed in the PatchCleanser paper~\cite{xiang2022patchcleanser}. To handle multiple ($K$) patches, PatchCleanser
generates a mask set with all possible $K$-mask combinations so that at least one of the $K$-masks can remove all patches. The robustness certification condition is that model predictions with all possible $2K$-mask combinations are all correct. In Table~\ref{tab-two-patch}, we report the performance of PCURE-ViT2x2-k12 instance for \textit{two} 32$\times$32 patches on the 224$\times$224 ImageNet images (4\% pixels in total). We can see that \framework also has non-trivial robustness against two-patch attacks.

\section{Additional Results for CIFAR-10}\label{apx-cifar}
In Section~\ref{sec-eval-detail}, we report the performance of PCURE-ViT14x2-k6 for three additional datasets (Table~\ref{tab-cifar}). In Figure~\ref{fig-cifar}, we report additional results for different \framework instances on the CIFAR-10 dataset. As shown in the figure, \framework can also effectively balance the three-way trade-off for CIFAR-10.

\begin{figure}[t]
    \centering
    \includegraphics[width=\linewidth]{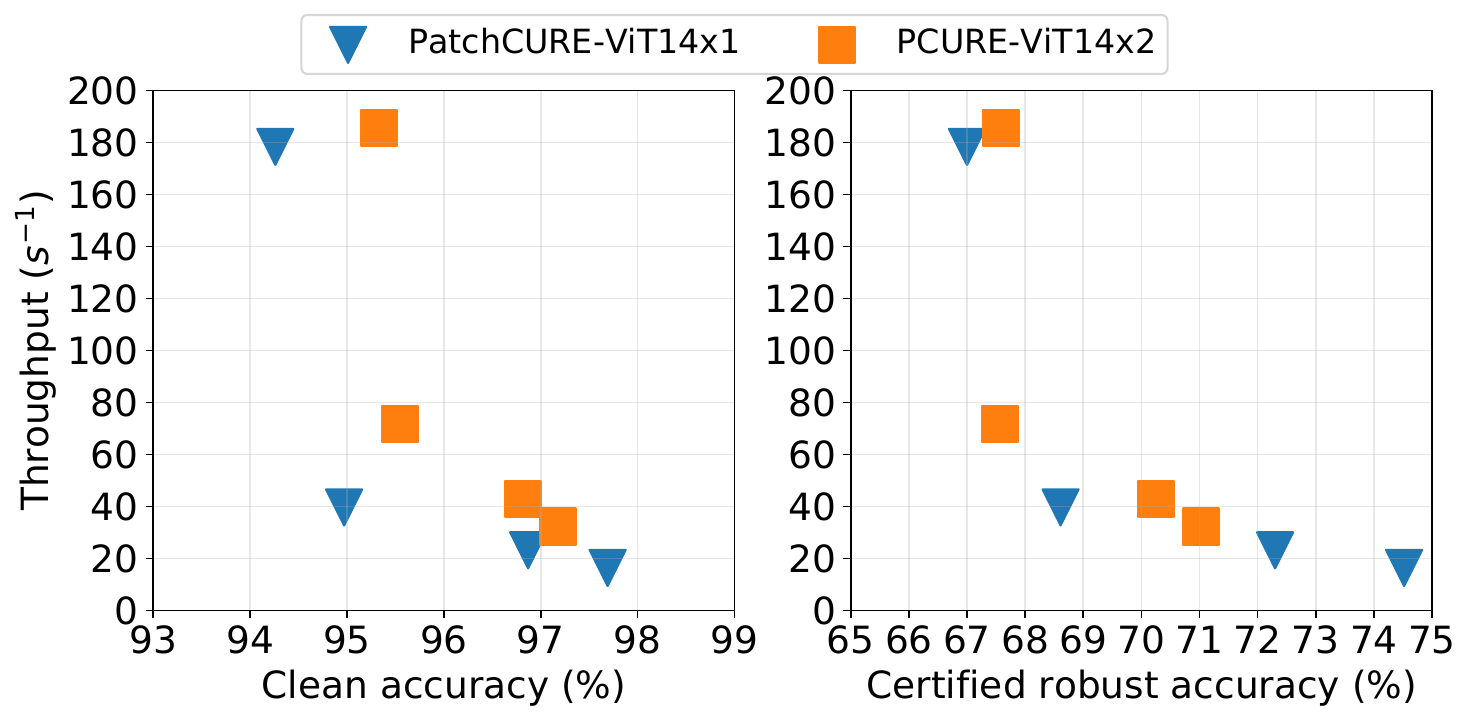}
    \caption{Performance of different \framework instances on the CIFAR-10 dataset}
    \label{fig-cifar}
\end{figure}

\begin{table}[t]
    \centering
    \caption{Performance of ViT-based defenses on CIFAR-10 (one 2\%-pixel patch on the resized 224$\times$224 images)}
    \label{tab-cifar-compare}
    \resizebox{\linewidth}{!}{
    \begin{tabular}{l|r|r|r}
    \toprule
        \multirow{2}{*}{Dataset} & \multicolumn{2}{c|}{Accuracy (\%)}  &Throughput \\
          & clean & robust & (img/s)\\
         \midrule
     PCURE-ViT14x2-k12  & 95.3    & 67.6     & 190.0     \\
     PCURE-ViT14x2-k6     & 96.8   & 70.3     & 44.4      \\
     PCURE-ViT2x2-k3      & 97.8   & 80.5       &  5.0   \\
     \midrule
         ViT-B~\cite{vit} &98.1&0&188.3\\
         PatchCleanser~\cite{xiang2022patchcleanser}/PCURE& 98.0   & 86.5  &3.8  \\
         S-ViT~\cite{salman2022certified}&90.8&67.6&0.7\\
         ECViT~\cite{ecvit}&93.5&76.4&2.3\\
         \bottomrule
    \end{tabular}}
\end{table}

In Table~\ref{tab-cifar-compare}, we report the performance of different ViT-based models on the CIFAR-10~\cite{cifar} dataset. As shown in the table, \framework has the following advantages compared to other defenses: efficient instances like PCURE-ViT14x2-k12 have significantly higher throughput and comparable robustness and utility; robust instances like PCURE-ViT-2x2-k3 match the robustness and utility performance of state-of-the-art defenses.

\section{Additional Results for PatchGuard and PatchCleanser}

In Table~\ref{tab-main}, we report the performance of PatchCleanser~\cite{xiang2022patchcleanser} and PatchGuard~\cite{xiang2021patchguard} using their default settings and models. In Table~\ref{tab-pgpc}, we report additional results for PatchCleanser~\cite{xiang2022patchcleanser} and PatchGuard~\cite{xiang2021patchguard} with the same SRF/LRF backbones used by \framework for a targeted comparison. For ViT-based backbones, \framework instances like PCURE-ViT14x2-k12 achieve significant speedup compared to PatchCleaner-ViT. Efficient \framework instances PCURE-k12 have similarly high inference throughput compared with PatchGuard, but achieve a much high certified robust accuracy. We can also see similar observations for ResNet/BagNet backbones. These comparisons demonstrate that \framework works well across different backbone models with different receptive field sizes.

\begin{table}[t]
    \centering
    \caption{Additional results for PatchCleanser~\cite{xiang2022patchcleanser} and PatchGuard~\cite{xiang2021patchguard} with different backbones}
    \label{tab-pgpc}
    \resizebox{\linewidth}{!}{
    \begin{tabular}{l|r|r|r}
    \toprule
     \multirow{2}{*}{Defenses\tnote{\dag}}  &   \multicolumn{2}{c|}{Accuracy (\%)}  &{Throughput }   \\
         &  \multicolumn{1}{c|}{clean} & \multicolumn{1}{c|}{robust} & (img/s)\\

         \midrule
        PatchCleanser-ViT-B&{82.5} (1.00)&{61.1} (1.00) &2.1 (1.0)\\ 
        {PCURE-ViT14x2-k12} & 78.3 (0.95)&44.2 (0.72)&{189.9 (90.4)}\\
        {PatchGuard-ViT14x2} & 76.9 (0.93)&23.1 (0.38)&193.4 (92.1)\\
        {PCURE-ViT14x1-k12} & 76.3 (0.92)&47.1 (0.77)&{182.0 (86.7)}\\
        {PatchGuard-ViT14x1} & 75.2 (0.91)&29.9 (0.49)&188.7 (89.8)\\
        {PCURE-ViT2x2-k12} & 71.3 (0.86)&46.8 (0.77)&{158.1 (75.3)}\\
        {PatchGuard-ViT2x2} & 69.6 (0.84)&33.0 (0.54)&164.7 (78.4)\\
        \midrule
        PatchCleanser-ResNet50 &78.3 (0.95)&53.1 (0.87)&10.6 (5.8)\\
        {PCURE-BagNet17-k50} & 65.4 (0.79)&42.2 (0.69)&{115.1 (54.8)}\\
        {PatchGuard-BagNet17} & 60.4 (0.73)&27.6 (0.45)&181.2 (86.3)\\
        {PCURE-BagNet33-k50} & 70.8 (0.85)&44.1 (0.72)&{136.8 (65.1)}\\
        {PatchGuard-BagNet33} & 67.2 (0.81)&24.0 (0.39)&183.4 (87.3)\\
        {PCURE-BagNet45-k50} & 72.4 (0.88)&34.8 (0.57)&{132.5 (63.1)}\\
        {PatchGuard-BagNet45} & 67.9 (0.82)&16.0 (0.26)&175.8 (83.7)\\

         \bottomrule
    \end{tabular}}

\end{table}

\end{document}